\documentclass{article}
\usepackage{amsmath,inputenc}
\usepackage{indentfirst}

\title{Safe Multi-Agent Reinforcement Learning with Convergence to Generalized Nash Equilibrium}
\author{Zeyang Li\\ MIT\\ {\tt zeyang@mit.edu}
        \and 
        Navid Azizan\\ MIT\\ {\tt azizan@mit.edu}
}
\date{}

\usepackage{graphicx}
\usepackage{csvsimple}
\usepackage{rotating}
\usepackage[letterpaper, portrait, margin=1in]{geometry}
\usepackage{hyperref}
\hypersetup{
    colorlinks=true,
    linkcolor=blue,
    citecolor=red,
    filecolor=magenta,      
    urlcolor=cyan
    }

\usepackage[nocomma]{optidef}
\usepackage{listings}
\usepackage{comment}
\usepackage{appendix}
\usepackage{amsfonts} 
\usepackage{amssymb}
\usepackage{bm}
\usepackage[square,comma,numbers,sort]{natbib}

\usepackage{tikz}
\usepackage{pgfplots}
\DeclareUnicodeCharacter{2212}{−}
\usepgfplotslibrary{groupplots,dateplot}
\usetikzlibrary{patterns,shapes.arrows}
\pgfplotsset{compat=newest}

\usepackage{caption}
\usepackage{subcaption}

\usepackage[ruled,vlined]{algorithm2e}

\usepackage{color} 
\DeclareFixedFont{\ttb}{T1}{txtt}{bx}{n}{12} 
\DeclareFixedFont{\ttm}{T1}{txtt}{m}{n}{12}  
\usepackage{bbm}
\usepackage{amsmath,amsthm}

\newtheorem{theorem}{Theorem}
\newtheorem{proposition}{Proposition}

\newtheorem{lemma}{Lemma}
\theoremstyle{definition}
\newtheorem{definition}{Definition}
\theoremstyle{remark}
\newtheorem{remark}{Remark}

\usepackage{color}
\definecolor{deepblue}{rgb}{0,0,0.5}
\definecolor{deepred}{rgb}{0.6,0,0}
\definecolor{deepgreen}{rgb}{0,0.5,0}

\usepackage{listings}

\renewcommand\footnotemark{}

\usepackage{placeins}

\begin{document}

\maketitle

\begin{abstract}
    Multi-agent reinforcement learning (MARL) has achieved notable success in cooperative tasks, demonstrating impressive performance and scalability. However, deploying MARL agents in real-world applications presents critical safety challenges. Current safe MARL algorithms are largely based on the constrained Markov decision process (CMDP) framework, which enforces constraints only on discounted cumulative costs and lacks an all-time safety assurance. Moreover, these methods often overlook the feasibility issue---where the system will inevitably violate state constraints within certain regions of the constraint set---resulting in either suboptimal performance or increased constraint violations.
    To address these challenges, we propose a novel theoretical framework for safe MARL with \emph{state-wise} constraints, where safety requirements are enforced at every state the agents visit. To resolve the feasibility issue, we leverage a control-theoretic notion of the feasible region, the controlled invariant set (CIS), characterized by the safety value function. We develop a multi-agent method for identifying CISs, ensuring convergence to a Nash equilibrium on the safety value function. By incorporating CIS identification into the learning process, we introduce a multi-agent dual policy iteration algorithm that guarantees convergence to a generalized Nash equilibrium in state-wise constrained cooperative Markov games, achieving an optimal balance between feasibility and performance.
    Furthermore, for practical deployment in complex high-dimensional systems, we propose \emph{Multi-Agent Dual Actor-Critic} (MADAC), a safe MARL algorithm that approximates the proposed iteration scheme within the deep RL paradigm. Empirical evaluations on safe MARL benchmarks demonstrate that MADAC consistently outperforms existing methods, delivering much higher rewards while reducing constraint violations.
\end{abstract}

\begin{sloppypar}
    \section{Introduction}

Cooperative multi-agent reinforcement learning (MARL) has drawn substantial research attention, driven by the increasing demand for effective coordination among intelligent machines \cite{zhang2021multi, shengbo2018reinforcement}. Agents in a cooperative MARL system collaborate with each other to pursue a joint objective, operating in a shared environment where each agent's decisions impact the entire team. To tackle the non-stationarity challenge of one agent's updates affecting others, the centralized training decentralized execution (CTDE) paradigm has been developed \cite{lowe2017multi, zhou2023malib}. In the CTDE framework, the algorithm has access to global information across all agents during the training phase, after which the trained agents are prepared to make independent decisions for deployment. Significant advancements have been made in this area, both in theory \cite{rashid2020monotonic, bertsekas2021multiagent, bhattacharya2023multiagent, zhong2024heterogeneous, liu2024maximum} and applications \cite{vinyals2019grandmaster, chu2019multi, yu2022surprising, novati2021automating}.

Despite the remarkable achievements in MARL algorithms, most of them are primarily designed to maximize rewards and achieve optimal performance. However, real-world deployments demand that agents operate within specific constraints, i.e., ensuring safety. In practice, safety enforcement should take precedence over performance optimization. For example, in the context of autonomous driving, vehicles must follow traffic rules and avoid collisions under all circumstances, prioritizing safety above all else. While safe RL has been extensively studied in the single-agent setting \cite{zhao2023state}, the safe MARL problem remains quite challenging. That is largely due to the fact that in the multi-agent setting, both the objective and the constraints are influenced by the decisions of all agents. There are multiple points of tension that could destabilize the learning process, including the inherent conflict between reward maximization and constraint satisfaction, as well as disagreements in updating directions across agents that could violate the shared constraints.

Pioneering works in safe MARL theory include \cite{melcer2022shield, lu2021decentralized, gu2023safe, ding2023provably, ying2024scalable, zhao2024multi}. Despite the significant progress, there are notable limitations with existing safe MARL methods. First, many algorithms, such as Safe Dec-PG \cite{lu2021decentralized}, MACPO and MAPPO-Lagrangian \cite{gu2023safe} are grounded in the constrained Markov decision process (CMDP) framework \cite{altman1999constrained}, in which the objective is to maximize total rewards while ensuring that the expected trajectory costs remain below a predefined threshold. However, in real-world applications, safety constraints must be enforced at every time step, rather than just on average, as any violation could lead to catastrophic consequences \cite{zhao2023state}. Second, existing methods rarely consider the feasibility issue, i.e., wherein no policy can keep the system persistently safe for certain states within a subset of the constraint set \cite{margellos2011hamilton}. Identifying the feasible region is critical for safe RL algorithms, as optimizing rewards over infeasible regions is ill-posed and can lead to unstable training \cite{yu2022reachability, li2024safe}. Additionally, a larger feasible region provides a broader operational space for the agents, leading to better performance. Third, it is particularly challenging to coordinate the updates of multiple agents to ensure joint constraint satisfaction while still improving overall performance. Since each agent’s update alters the set of feasible directions for the others, the update mechanism must be carefully designed to ensure that agents collaboratively converge to a safe and performant equilibrium. Existing methods often struggle to balance constraint satisfaction with task performance, resulting in either suboptimal outcomes or increased violations of safety constraints.

To address the aforementioned challenges, this paper introduces a novel theoretical framework for handling feasibility and optimality in safe MARL with state-wise constraints. Unlike the CMDP setting commonly used in existing studies, which imposes trajectory-wise constraints, we focus on a stricter safety formulation where agents must satisfy constraints at every state they visit. For identification of feasible region, we utilize the concept of controlled invariant set in control theory, characterized by the safety value function from Hamilton-Jacobi reachability analysis \cite{margellos2011hamilton, bansal2017hamilton}. Inspired by the one-agent-at-a-time policy improvement technique \cite{bertsekas2021multiagent, bhattacharya2023multiagent, zhong2024heterogeneous, liu2024maximum}, we propose a MARL approach for the identification of controlled invariant sets, which ensures convergence to a Nash equilibrium on safety value function. By leveraging action-space constraints derived from these controlled invariant sets, we develop a multi-agent dual policy iteration algorithm for constrained cooperative Markov games. The objective is two-fold: for states within the maximal identifiable controlled invariant set, agents aim to maximize the joint rewards, whereas, for states outside this set, the focus shifts to minimizing constraint violations. The proposed iteration scheme employs two individual policies for each agent—one dedicated to achieving the two-fold objective and the other for collaboratively learning the controlled invariant sets. Our methods enable agents to obtain the highest rewards possible while ensuring safety, thereby striking an optimal balance between task performance and constraint satisfaction (i.e., generalized Nash equilibrium). The key contributions of this paper are summarized as follows.
\begin{itemize}
    \item We propose a MARL approach for identifying controlled invariant sets of nonlinear dynamical systems. Our method ensures convergence to a Nash equilibrium on the safety value function, achieving local optimality in finding feasible regions within the multi-agent setting.
    \item We introduce the multi-agent dual policy iteration algorithm, which, to the best of our knowledge, is the first safe MARL algorithm with guaranteed convergence to a generalized Nash equilibrium on state-wise constrained cooperative Markov games. The key insight is to transform state constraints into state-dependent action spaces, which allows optimizing task performance across all feasible directions in an agent-by-agent manner, despite the inherent non-stationarity in safe MARL problems.
    \item  We propose multi-agent dual actor-critic (MADAC), a safe MARL algorithm that approximates the proposed iteration scheme within the deep RL context. Empirical evaluations on safe MARL benchmarks demonstrate that MADAC consistently outperforms existing methods, delivering much higher rewards while reducing constraint violations.
\end{itemize}

\section{Related Work}
\textbf{Safe RL} has been extensively studied in the single-agent setting. There are mainly two types of safety formulation in safe RL algorithms: trajectory-wise safety and state-wise safety. The former requires that the expected trajectory cost remains below a certain threshold (the CMDP framework \cite{altman1999constrained}), while the latter enforces constraints at every state the agents visit \cite{zhao2023state}.
For trajectory-wise safety formulation, algorithms typically employ the Lagrange method or trust region method to perform constrained policy optimization. Ha et al. \cite{ha2020learning} enhance the soft actor-critic algorithm by adding Lagrange multipliers and performing dual ascent on the Lagrangian. Chow et al. \cite{chow2017risk} introduce constraints on the conditional value-at-risk of cumulative costs and design corresponding policy gradient and actor-critic algorithms. Achiam et al. \cite{achiam2017constrained} propose the constrained policy optimization algorithm with guarantees for near constraint satisfaction at each iteration. Zhang et al. \cite{zhang2020first} update policy in the nonparameterized policy space and then project the updated policy back into the parametric policy space.
For state-wise safety formulation, the key is to achieve set invariance with control-theoretic tools such as Hamilton-Jacobi reachability \cite{margellos2011hamilton, hsu2023safety}, control barrier function \cite{ames2019control}, and safety index \cite{liu2014control}. Ma et al. \cite{ma2022joint} jointly optimize the control policy and the neural safety index. Yu et al. \cite{yu2022reachability} jointly learn the safety value function and the control policy. Li et al. \cite{li2024safe} propose a dual policy iteration algorithm that converges to the maximal robust invariant set and optimal task policy simultaneously. Wang et al. \cite{wang2024magics} propose a robust safe RL algorithm that guarantees local convergence to a minimax equilibrium solution. State-wise safety has also been explored from optimization perspectives. Zhao et al. \cite{zhao2024statewise} propose state-wise constrained policy optimization, demonstrating that the algorithm achieves bounded worst-case safety violations in state-wise constrained MDPs.

\textbf{Safe MARL} is an emerging research area that has been gaining increasing attention.
Lu et al. \cite{lu2021decentralized} propose a decentralized policy gradient approach for constrained policy optimization with networked agents and prove the algorithm converges to a first-order stationarity point for the formulated problem.
Melcer et al. \cite{melcer2022shield} address the shielded reinforcement learning problem in the multi-agent setting and present an algorithm for the decomposition of a centralized shield.
Gu et al. introduce trust region optimization techniques that ensure monotonic improvement in rewards while satisfying safety constraints at every iteration. They also present two algorithms (MACPO and MAPPO-Lagrangian) for practical deployment in high-dimensional environments \cite{gu2023safe}.
Zhao et al. \cite{zhao2024multi} update policies by solving a constrained optimization problem in the non-parameterized policy space and then projecting the solution into the parametric policy space.
Ding et al. \cite{ding2023provably} propose a provably efficient algorithm for solving episodic two-player zero-sum constrained Markov games.
Ying et al. \cite{ying2024scalable} present a primal-dual approach for safe MARL problems, where the objective and constraints are defined by general utilities, and prove its convergence to a first-order stationary point.
Our work differs from existing methods in several key aspects. First, we consider a stricter safety requirement that enforces constraints at every state the agents visit, contrasting with the trajectory-wise safety formulation (the CMDP framework) employed in \cite{lu2021decentralized, gu2023safe, zhao2024multi, ding2023provably, ying2024scalable}.
Second, we address the feasibility issue inherent in safe RL problems \cite{yu2022reachability, li2024safe}. Our algorithm automatically identifies the feasible region and adjusts to different objectives based on the state's feasibility, whereas existing methods \cite{lu2021decentralized, gu2023safe, zhao2024multi, ding2023provably, ying2024scalable} generally assume initial feasibility. Third, although the algorithm in \cite{gu2023safe} achieves performance improvement and constraint satisfaction at each iteration, no convergence guarantee is provided. Our method ensures convergence to a generalized Nash equilibrium, where no agent can enhance the objective by unilaterally modifying their policy within all feasible directions.

\section{Multi-Agent Dual Policy Iteration}

In this section, we introduce the multi-agent dual policy iteration algorithm for state-wise safe MARL.
We first develop a multi-agent approach to identify controlled invariant sets for nonlinear systems, which plays a critical role in the following formulation and analysis. Next, we provide a formal problem formulation for state-wise constrained cooperative Markov games, specifying the objective and constraints. Finally, we present the iteration scheme and establish its convergence to a generalized Nash equilibrium for the constructed problem.

For simplicity of exposition, we will consider finite state and action spaces. Our results extend to general compact spaces. We have the following definition.

\begin{definition}[State-wise Constrained Cooperative Markov Game]
    A state-wise constrained cooperative Markov game is represented by a tuple $\mathcal{M} = \left(\mathcal{N}, \mathcal{X}, \mathcal{U}, f, r, h, \gamma, \gamma_h, d\right)$, where $\mathcal{N} = \{1, 2, \dots, n\}$ denotes the set of $n$ agents, $\mathcal{X}$ denotes the state space, $\mathcal{U} = \prod_{i=1}^n \mathcal{U}_i$ is the joint action space (as the product of individual action spaces), $f: \mathcal{X} \times \mathcal{U} \rightarrow \mathcal{X}$ denotes the deterministic system dynamics, $r: \mathcal{X} \times \mathcal{U} \rightarrow \mathbb{R}$ is the reward function, $h: \mathcal{X} \rightarrow \mathbb{R}$ is the constraint function, $\gamma \in (0, 1)$ denotes the reward discount factor, $\gamma_h \in (0, 1)$ denotes the safety discount factor, and $d$ denotes the initial state distribution.
\end{definition}

Note that the safety discount factor $\gamma_h$ is introduced specifically for algorithms addressing safety, which will be explained later.
Given a state $x\in \mathcal{X}$, each agent individually selects an action $u_i \in \mathcal{U}_i$. An individual policy is denoted by $\pi_i: \mathcal{X} \rightarrow \mathcal{U}_i$, and $u_i=\pi_i(x)$. The joint action of all agents is denoted by $u = (u_1, u_2, \cdots, u_n)\in \mathcal{U}$. The joint policy is expressed as $\pi=\prod \pi_i$, and thus $u=\pi(x)$.

Since our method involves operations on the actions and policies of a subset of agents, the following notation is introduced. An ordered subset of $\mathcal{N}$, $\left\{i_1, i_2,\cdots,i_m\right\}$, is denoted as $i_{1:m}$, and its complement by $i_{-1:m}$. The joint policy of agents $i_{1:m}$ is denoted by $\pi_{i_{1:m}}$, and their joint action by $u_{i_{1:m}}$.

\subsection{Identification of Controlled Invariant Set}

As discussed earlier, a key fact from control theory is that, given a constraint set, only a subset of it can be controlled to remain safe indefinitely, which is known as the controlled invariant set (CIS). Therefore, to design safe RL algorithms, it is essential to properly identify the CIS and to constrain the system state within the CIS rather than the full constraint set.
Otherwise, when the system state leaves the CIS, it will inevitably violate the constraints regardless of the control inputs, thus causing the constrained policy optimization to be ill-conditioned.

Hamilton-Jacobi reachability analysis is a control-theoretic verification method for ensuring the safety of nonlinear systems. In this approach, the CIS is identified through the safety value function, which is the solution to the Hamilton-Jacobi-Isaacs partial differential equation (HJI-PDE) \cite{margellos2011hamilton, bansal2017hamilton}.
However, solving the PDEs requires discretizing the state and action spaces, which leads to exponential scaling with increasing dimensions (the curse of dimensionality).
Recently, RL approaches have been developed for Hamilton-Jacobi reachability, providing more tractable solutions for high-dimensional systems \cite{fisac2019bridging, yu2022reachability, li2024safe}. However, these methods are limited to single-agent systems, and it remains unclear how to extend them to multi-agent settings as well as providing convergence guarantees. We will address these gaps by proposing an agent-by-agent sequential update scheme for safety value functions.
Given some safety requirements $h(x)\geq0$, we define the safety value function and the controlled invariant set as follows.

\begin{definition}[Constraint Set]
    The constraint set is defined as the zero-superlevel set of constraint function $h(x)$:
    \begin{equation}
        S_h=\left\{x\in \mathcal{X} \mathrel{}\middle|\mathrel{} h(x)\geq 0\right\}.
    \end{equation}
\end{definition}

\begin{definition}[Safety Value Function \cite{yu2022reachability, li2024safe}]
    Given a joint policy $\pi_h$ of all agents and $\gamma_h\in (0,1)$, the safety value function $V_h^{\pi_h}$ of this specific policy is defined as
    \begin{equation}
        \begin{gathered}
            V_h^{\pi_h}(x)=\min _{t \in \mathbb{N}}\left\{\gamma_h^{t+1}h\left(x_t\right)\right\} \\
            \begin{aligned}
                \text { s.t. \quad }&x_0=x,u_t=\pi_h\left(x_t\right),\\
                &x_{t+1}=f\left(x_{t}, u_{t}\right),t\geq0.
            \end{aligned}
        \end{gathered}
    \end{equation}
\end{definition}

\begin{definition}[Controlled Invariant Set \cite{li2024safe}]
    \label{Controlled Invariant Set}
    If $\mathop{\max}\limits_{x\in \mathcal{X}} V_h^{\pi_h}(x)\geq 0$ for a joint policy $\pi_h$, the controlled invariant set of this specific policy is defined as
        \begin{equation}
            S_{\rm{c}}^{\pi_{h}}=\left\{x \in \mathcal{X} \mathrel{}\middle|\mathrel{} V_h^{\pi_h}(x)\geq 0\right\}.
        \end{equation}
\end{definition}

The safety value function $V_h^{\pi_h}$ quantifies the level of risk associated with the system under a given policy $\pi_h$ by indicating the minimum constraint value of the infinite-horizon trajectory driven by $\pi_h$.
The discount factor $\gamma_h$ is introduced to ensure that the safety value is the fixed point of a contraction mapping (which is defined in Lemma \ref{lemma for safety value function}).
Therefore, at any state that has a non-negative safety value, the system is guaranteed to remain safe under the policy $\pi_h$. This leads to the concept of controlled invariance, as stated in Definition \ref{Controlled Invariant Set}, which we will further explore later.

Since $V_h^{\pi_h}$ is defined over the infinite horizon, it inherently follows a recursive structure arising from dynamic programming, referred to as the self-consistency condition \cite{fisac2019bridging, yu2022reachability, li2024safe}.

\begin{lemma}[Self-consistency Condition for Safety Value Function \cite{yu2022reachability, li2024safe}]
    \label{lemma for safety value function}
    The safety value function $V_h^{\pi_h}$ satisfies
    \begin{equation}
        \label{Self-consistency Condition for safety value function}
        V_h^{\pi_h}(x)=\gamma_h \min \left\{h(x), V_h^{\pi_h}\left(f(x,\pi_h(x))\right)\right\}.
    \end{equation}
    Furthermore, the safety self-consistency operator $\mathcal{T}_h^{\pi_h}$ defined as
    \begin{equation}
        \label{Self-consistency Operator for safety value function}
        \left[\mathcal{T}_h^{\pi_h} (V_h)\right](x)=\gamma_h \min \left\{h(x), V_h\left(f(x,\pi_h(x))\right)\right\}
    \end{equation}
    is a contraction mapping and $V_h^{\pi_h}$ is the unique fixed point of $\mathcal{T}_h^{\pi_h}$.
\end{lemma}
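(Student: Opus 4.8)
The plan is to prove the two assertions separately: first establish the recursive self-consistency identity directly from the definition of $V_h^{\pi_h}$, and then use that identity together with the nonexpansiveness of the minimum to obtain the contraction property and uniqueness of the fixed point.

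For the self-consistency condition, I would begin from the definition $V_h^{\pi_h}(x)=\min_{t\in\mathbb{N}}\gamma_h^{t+1}h(x_t)$ evaluated along the trajectory with $x_0=x$, $u_t=\pi_h(x_t)$, $x_{t+1}=f(x_t,u_t)$, and split the minimization by isolating the $t=0$ term from the tail $t\geq 1$, giving $V_h^{\pi_h}(x)=\min\{\gamma_h h(x),\ \min_{t\geq 1}\gamma_h^{t+1}h(x_t)\}$. The crucial observation is that the subtrajectory $(x_1,x_2,\dots)$ is exactly the trajectory generated by $\pi_h$ starting from $y:=f(x,\pi_h(x))$, because the dynamics are deterministic and the policy is stationary; writing $y_s$ for this trajectory, one has $y_s=x_{s+1}$. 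Reindexing by $s=t-1$ then yields $\min_{t\geq 1}\gamma_h^{t+1}h(x_t)=\gamma_h\min_{s\geq 0}\gamma_h^{s+1}h(y_s)=\gamma_h V_h^{\pi_h}(y)$. Factoring out the common $\gamma_h$ produces $V_h^{\pi_h}(x)=\gamma_h\min\{h(x),V_h^{\pi_h}(f(x,\pi_h(x)))\}$, which is precisely the statement that $V_h^{\pi_h}$ is a fixed point of $\mathcal{T}_h^{\pi_h}$.

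For the contraction claim, I would work on the space of bounded functions $\mathcal{X}\to\mathbb{R}$ equipped with the sup-norm, which is complete since $\mathcal{X}$ is finite. Fixing two functions $V_h^1,V_h^2$ and a state $x$, and writing $y=f(x,\pi_h(x))$, I would bound $|[\mathcal{T}_h^{\pi_h}V_h^1](x)-[\mathcal{T}_h^{\pi_h}V_h^2](x)|=\gamma_h|\min\{h(x),V_h^1(y)\}-\min\{h(x),V_h^2(y)\}|$. The key step is the elementary fact that $b\mapsto\min\{a,b\}$ is nonexpansive, i.e.\ $|\min\{a,b\}-\min\{a,c\}|\leq|b-c|$, which I would verify by a short case analysis on the ordering of $a,b,c$. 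Applying it with $a=h(x)$ gives the pointwise estimate $\gamma_h|V_h^1(y)-V_h^2(y)|\leq\gamma_h\|V_h^1-V_h^2\|_\infty$; taking the supremum over $x$ yields $\|\mathcal{T}_h^{\pi_h}V_h^1-\mathcal{T}_h^{\pi_h}V_h^2\|_\infty\leq\gamma_h\|V_h^1-V_h^2\|_\infty$. Since $\gamma_h\in(0,1)$, the Banach fixed-point theorem provides a unique fixed point, which by the first part must coincide with $V_h^{\pi_h}$.

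The main obstacle is not the contraction estimate, which is routine once the nonexpansiveness of $\min$ is available, but rather the well-definedness and manipulation of the infinite-horizon minimum in the self-consistency step. I would need to justify that $\min_{t\in\mathbb{N}}\gamma_h^{t+1}h(x_t)$ is genuinely attained: because $\mathcal{X}$ is finite, $h$ is bounded by some $M$, so $|\gamma_h^{t+1}h(x_t)|\leq\gamma_h^{t+1}M\to 0$, meaning only finitely many terms can lie below any fixed negative level. This confines any candidate minimizer to a finite horizon and legitimizes both the splitting and the reindexing used above, allowing me to treat the minimum rigorously rather than as a mere infimum.
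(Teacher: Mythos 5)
The paper does not actually prove this lemma---it is stated as a known result imported from the cited references---so there is no in-paper argument to compare against; your proof is the standard one and it is essentially correct. The decomposition of $\min_{t\in\mathbb{N}}\gamma_h^{t+1}h(x_t)$ into the $t=0$ term and the reindexed tail, using determinism and stationarity of $\pi_h$ to identify the tail with the trajectory from $f(x,\pi_h(x))$, is exactly the right mechanism for the self-consistency identity, and the contraction estimate via the nonexpansiveness of $b\mapsto\min\{a,b\}$ together with Banach's fixed-point theorem on $(\mathbb{R}^{|\mathcal{X}|},\|\cdot\|_\infty)$ is the standard route. One small correction to your final paragraph: the minimum need not be attained. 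If $h(x_t)>0$ for every $t$ along the trajectory, then all terms $\gamma_h^{t+1}h(x_t)$ are strictly positive and tend to $0$, so the infimum is $0$ and no finite $t$ achieves it; your argument that candidate minimizers are confined to a finite horizon only applies when the infimum is negative (or when some $h(x_t)\le 0$). The fix is simply to read the definition as an infimum throughout: the splitting $\inf_{t\ge 0}=\min\{a_0,\inf_{t\ge 1}\}$ and the reindexing $\inf_{t\ge 1}\gamma_h^{t+1}h(x_t)=\gamma_h\inf_{s\ge 0}\gamma_h^{s+1}h(y_s)$ are valid for infima without any attainment hypothesis, so nothing else in your proof changes. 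With that adjustment the argument is complete and rigorous.
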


\begin{remark}
    Note that there are multiple ways to introduce the discount factor $\gamma_h$ in the safety value function, such as \cite{fisac2019bridging, li2022infinite, so2024train}, leading to slightly different formulations of the self-consistency condition. For simplicity of presentation, we choose to omit the $(1-\gamma_h)h(x)$ term commonly used in the literature \cite{fisac2019bridging, yu2022reachability, li2024safe} since the resulting form is still a contraction. We should note, however, that our proposed iteration scheme works with the other formulations.
\end{remark}

Solving the operator equation (\ref{Self-consistency Operator for safety value function}) is essentially the policy evaluation step, in which we calculate the safety value function $V_h^{\pi_h}$ for a given policy $\pi_h$. Naturally, the next step is policy improvement. In the single-agent setting, this can be done as
\begin{equation}
    \label{Single-agent Safety Policy Improvement}
    \begin{aligned}
        \pi_h^{\rm{new}}(x)&=\underset{u \in \mathcal{U}}{\operatorname{argmax}}\left\{\gamma_h \min \left\{h(x), V_h^{\pi_h}\left(f(x,u)\right)\right\}\right\}\\&=\underset{u \in \mathcal{U}}{\operatorname{argmax}}\left\{V_h^{\pi_h}\left(f(x,u)\right)\right\}.
    \end{aligned}
\end{equation}
However, this maximization step can become intractable as the number of agents increases. Suppose there are $n$ agents and each of them has $C$ admissible actions, then the computational complexity of (\ref{Single-agent Safety Policy Improvement}) is $O(C^n)$, which scales exponentially with the number of agents. To address this issue, we propose an agent-by-agent sequential update scheme for optimizing the safety value function with a computational complexity of $O(Cn)$, while still preserving the convergence guarantee. The proposed multi-agent policy iteration approach for the identification of CIS is summarized in Algorithm \ref{Multi-agent safety policy iteration}.
The term \textit{safety} in the names emphasizes that these policies are designed to be as safe as possible, without regard to reward maximization, distinguishing them from the \textit{task} policies introduced later.

\begin{algorithm}[htbp]
    \LinesNumbered
    \SetAlgoLined
    \caption{Multi-agent safety policy iteration for identification of controlled invariant set}
    \label{Multi-agent safety policy iteration}
    \KwIn{Initial joint safety policy $\pi_h$.}
    \For{$k$ iterations}{
        \tcp{Joint safety policy evaluation}
        Solve $V_h^{\pi_h} = \mathcal{T}_h^{\pi_h}\left(V_h^{\pi_h}\right)$ for $V_h^{\pi_h}$.\label{Safety policy evaluation}
        
        \tcp{Multi-agent safety policy improvement}
        Randomly shuffle the order of agents $\mathcal{N}$ as $i_{1:n}$.\label{first line of full safety policy improvement}

        Perform the following agent-by-agent sequential update:

        \For{each $x \in \mathcal{X}$}{
            $\pi_{h,i_1}^{\rm{new}}(x) = \underset{u_{i_1} \in \mathcal{U}_{i_1}}{\operatorname{argmax}} \left\{ V_h^{\pi_h}\left(f\left(x, \left(u_{i_1}, \pi_{h,-i_{2:n}}(x)\right)\right)\right)\right\}$\label{first line of safety policy improvement}

            $\pi_{h,i_2}^{\rm{new}}(x) = \underset{u_{i_2} \in \mathcal{U}_{i_2}}{\operatorname{argmax}} \left\{ V_h^{\pi_h}\left(f\left(x, \left(\pi_{h,i_1}^{\rm{new}}(x), u_{i_2}, \pi_{h,-i_{3:n}}(x)\right)\right)\right)\right\}$

            $\ \quad\vdots$

            $\pi_{h,i_n}^{\rm{new}}(x) = \underset{u_{i_n} \in \mathcal{U}_{i_n}}{\operatorname{argmax}} \left\{ V_h^{\pi_h}\left(f\left(x, \left(\pi_{h,i_{1:n-1}}^{\rm{new}}(x), u_{i_n}\right)\right)\right)\right\}$\label{last line of safety policy improvement}
        }\label{last line of full safety policy improvement}
    }
\end{algorithm}

The core of the proposed multi-agent safety policy improvement is to sequentially optimize the individual safety policy of each agent, utilizing the most up-to-date policies available. This results in a coordinate-descent-style updating scheme.
Our method is inspired by the one-agent-at-a-time technique used in previous MARL algorithms \cite{bertsekas2021multiagent, bhattacharya2023multiagent, zhong2024heterogeneous, liu2024maximum}, which focuses on optimizing the standard value function (cumulative rewards). We will show this approach also works for the safety problem, despite its structural difference from the reward case.

As expected, there is a trade-off between computational complexity and optimality. While one may not be able to obtain a global optimum in general, in the following theorem, we prove that this algorithm converges to a Nash equilibrium on the safety value function.

\begin{theorem}[Convergence of Multi-Agent Safety Policy Iteration]
    \label{Convergence of Multi-agent Safety Policy Iteration}
    Algorithm \ref{Multi-agent safety policy iteration} converges to a set of individual safety policies $\left\{\pi_{h,1}^*,\pi_{h,2}^*, \cdots,\pi_{h,n}^*\right\}$ that collectively achieve a Nash equilibrium, i.e., no agent can unilaterally modify its individual safety policy to improve the safety value function of the joint safety policy.
\end{theorem}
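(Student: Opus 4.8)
The plan is to mirror the classical policy-iteration convergence argument, replacing the standard Bellman operator with the safety self-consistency operator $\mathcal{T}_h^{\pi_h}$ from Lemma \ref{lemma for safety value function}, and to exploit its monotone-contraction structure to handle the sequential, agent-by-agent updates. I would first record two structural facts. (i) $\mathcal{T}_h^{\pi_h}$ is \emph{monotone}: $V\leq V'$ pointwise implies $\mathcal{T}_h^{\pi_h}V\leq \mathcal{T}_h^{\pi_h}V'$, which is immediate from the monotonicity of $\min\{h(x),\cdot\}$ and the positivity of $\gamma_h$. (ii) A comparison principle: since $\mathcal{T}_h^{\pi_h}$ is a monotone contraction with unique fixed point $V_h^{\pi_h}$, any $V$ with $\mathcal{T}_h^{\pi_h}V\geq V$ satisfies $V_h^{\pi_h}\geq V$, and symmetrically $\mathcal{T}_h^{\pi_h}V\leq V$ implies $V_h^{\pi_h}\leq V$. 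Fact (ii) follows by iterating $\mathcal{T}_h^{\pi_h}$ on $V$ and noting that the resulting monotone sequence converges to the fixed point.

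Next I would establish monotone policy improvement across one outer iteration. Because every agent's update in lines \ref{first line of safety policy improvement}--\ref{last line of safety policy improvement} uses the \emph{same} evaluated function $V_h^{\pi_h}$, and each coordinate maximization retains the incumbent action as a feasible choice, a telescoping argument over the ordered agents $i_{1:n}$ gives $V_h^{\pi_h}\!\left(f(x,\pi_h^{\mathrm{new}}(x))\right)\geq V_h^{\pi_h}\!\left(f(x,\pi_h(x))\right)$ for all $x$. Combining this with the self-consistency identity \eqref{Self-consistency Condition for safety value function} yields $\mathcal{T}_h^{\pi_h^{\mathrm{new}}}V_h^{\pi_h}\geq \mathcal{T}_h^{\pi_h}V_h^{\pi_h}=V_h^{\pi_h}$, and the comparison principle (ii) then gives $V_h^{\pi_h^{\mathrm{new}}}\geq V_h^{\pi_h}$ pointwise. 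Hence the sequence of safety value functions produced by the algorithm is pointwise nondecreasing; since $h$ is bounded on the finite state space, each $V_h^{\pi_h}$ is bounded (above by $\gamma_h\max_{x}h(x)$), so the values converge, and with finitely many deterministic joint policies the value stabilizes after finitely many iterations.

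Finally I would characterize the stabilized policy $\pi_h^{*}$ as a Nash equilibrium. The key claim is that at stabilization every agent's individual policy is a single-agent best response: for each $i$, $\pi_{h,i}^{*}$ maximizes $V_h^{\pi_h^{*}}\!\left(f(x,(u_i,\pi_{h,-i}^{*}(x)))\right)$ over $u_i$. Were this false for some agent, an outer iteration whose random order places that agent first would, by the telescoping/comparison argument now carried with a strict inequality, produce a policy with strictly larger safety value at some state, contradicting stabilization. Granting the best-response property, for any unilateral deviation $\pi_{h,i}'$ of agent $i$ we obtain $\mathcal{T}_h^{(\pi_{h,i}',\pi_{h,-i}^{*})}V_h^{\pi_h^{*}}\leq V_h^{\pi_h^{*}}$, whence the $\leq$ direction of the comparison principle gives $V_h^{(\pi_{h,i}',\pi_{h,-i}^{*})}\leq V_h^{\pi_h^{*}}$ pointwise---exactly the statement that no agent can unilaterally improve the safety value function.

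I expect the main obstacle to be the stabilization-implies-best-response step: one must rule out cycling among distinct policies that share the converged value and argue that the finite policy space together with the shuffled ordering forces each agent, in turn, to be an exact maximizer in the limit. The monotonicity and comparison lemmas are routine once the operator's order-preserving property is in hand; the delicate point is translating ``the value no longer increases under any ordering'' into the \emph{per-agent} optimality condition required for a Nash equilibrium, in particular converting an operator-level strict inequality into a strict increase of the fixed-point value.
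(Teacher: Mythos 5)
Your proposal follows essentially the same route as the paper: the telescoping per-agent inequality evaluated on the shared function $V_h^{\pi_h}$, a pointwise nondecreasing and bounded sequence of safety value functions, and a fixed-point characterization of the limit as a Nash equilibrium. Your comparison principle is just the abstract form of the paper's explicit unrolling of the one-step inequality along the trajectory generated by $\pi_h^{k+1}$, and your final deviation argument ($\mathcal{T}_h^{(\pi_{h,i}',\pi_{h,-i}^*)}V_h^{\pi_h^*}\leq V_h^{\pi_h^*}$, hence $V_h^{(\pi_{h,i}',\pi_{h,-i}^*)}\leq V_h^{\pi_h^*}$) is exactly what the paper's last display asserts, spelled out more carefully.

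One step in your plan would fail as described, and it is the one you flag yourself. The contrapositive ``if some agent is not a best response, an ordering placing it first yields a strictly larger safety value'' does not go through: a strict improvement in the lookahead $V_h^{\pi_h^*}(f(x,(u_i,\pi_{h,-i}^*(x))))$ can be clipped by the outer $\min\{h(x),\cdot\}$, and further by the mins accumulated along the trajectory, so the fixed-point value need not strictly increase even when the one-step lookahead does. The way out---which is what the paper implicitly does---is not to argue via strict increase of the value at all, but via stabilization of the policies themselves: the improvement step returns an argmax of $V_h^{\pi_h}(f(x,(u_i,\cdot)))$ (the unclipped quantity), so once the policies stop changing, each agent's converged action literally is a maximizer of the lookahead against the others' converged policies, and the $\leq$ direction of your comparison principle then finishes the job. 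What remains delicate, for both you and the paper, is why the policies (and not merely the values) stabilize, since argmax ties could in principle allow cycling at constant value; this is repaired by a consistent tie-breaking rule (e.g., retain the incumbent action whenever it attains the maximum), after which value stabilization on the finite policy space forces policy stabilization.
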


\begin{proof}
    Suppose $i_{1:n}$ is the optimization order of agents in the $k$-th iteration. $\forall x\in \mathcal{X}$,  we have
    \begin{equation}
        \label{recursive proof of safety policy improvement}
        \begin{aligned}
            \gamma_h \min \left\{h(x), V_h^{\pi_h^k}\left(f\left(x, \pi_h^{k+1}(x)\right)\right)\right\}            \geq& \gamma_h \min \left\{h(x), V_h^{\pi_h^k}\left(f\left(x, \left(\pi^{k+1}_{h,i_{1:n-1}}(x), \pi_{h,i_n}^k(x)\right)\right)\right)\right\} \\
            \geq& \gamma_h \min \left\{h(x), V_h^{\pi_h^k}\left(f\left(x, \left(\pi^{k+1}_{h,i_{1:n-2}}(x), \pi_{h,i_{n-1:n}}^k(x)\right)\right)\right)\right\} \\
            \geq& \ldots \\
            \geq& \gamma_h \min \left\{h(x), V_h^{\pi_h^k}\left(f\left(x, \pi_h^k(x)\right)\right)\right\}.
        \end{aligned}
    \end{equation}
    The inequalities in (\ref{recursive proof of safety policy improvement}) are obtained by recursively applying the definition of multi-agent safety policy improvement (Lines \ref{first line of safety policy improvement} to \ref{last line of safety policy improvement} in Algorithm \ref{Multi-agent safety policy iteration}) in reverse.
    Now applying (\ref{recursive proof of safety policy improvement}) and the self-consistency condition (\ref{Self-consistency Condition for safety value function}) to the definition of $V_h$, $\forall x_0\in\mathcal{X}$, the following relationship holds:
    \begin{equation}
        \begin{aligned}
            V_h^{\pi_h^k}\left(x_0\right) =&\gamma_h \min \left\{h\left(x_0\right), V_h^{\pi_h^k}\left(f\left(x_0, \pi_h^k\left(x_0\right)\right)\right)\right\} \\
            \leq & \gamma_h \min \left\{h\left(x_0\right), V_h^{\pi_h^k}\left(f\left(x_0, \pi_h^{k+1}\left(x_0\right)\right)\right)\right\} \\
            =& \gamma_h \min \left\{h\left(x_0\right), \gamma_h \min \left\{h\left(x_1\right), V_h^{\pi_h^k}\left(f\left(x_1, \pi_h^k\left(x_1\right)\right)\right)\right\}\right\} \\
            \leq & \gamma_h \min \left\{h\left(x_0\right), \gamma_h \min \left\{h\left(x_1\right), V_h^{\pi_h^k}\left(f\left(x_1, \pi_h^{k+1}\left(x_1\right)\right)\right)\right)\right\} \\
            \leq & \cdots \\
            \leq & \gamma_h \min \left\{h\left(x_0\right), \gamma_h \min \left\{h\left(x_1\right), \gamma_h \min \left\{h\left(x_2\right), \cdots\right\}\right\}\right. \\
            =& \min \left\{\gamma_h^{t+1} h\left(x_t\right)\right\} \\
            =& V_h^{\pi_h^{k+1}}\left(x_0\right),
            \end{aligned}
    \end{equation}
    in which $x_{t+1}=f\left(x_{t}, \pi_h^{k+1}\left(x_{t}\right)\right)$ for $t\geq0$.

    Therefore, the sequence of safety value function on the joint safety policy $\left\{V_h^{\pi_h^k}\right\}$ generated by Algorithm \ref{Multi-agent safety policy iteration} is non-decreasing. Moreover, $V_h^{\pi_h^k}$ is upper bounded by the global optimal safety value function $V_h^{\rm{opt}}$, which is the fixed point of the safety Bellman equation $V_h^{\rm{opt}}(x)=\gamma_h \min \left\{h(x), \max\limits_{u\in \mathcal{U}}\left\{V_h^{\rm{opt}}\left(f(x,u)\right)\right\}\right\}$ \cite{fisac2019bridging, li2024safe}.
    Consequently, the sequence $\left\{V_h^{\pi_h^k}\right\}$ converges. Once converged, performing the agent-by-agent sequential update will no longer alter the joint safety value function or safety policies.
    Let $\pi_h^*=\prod \pi_{h,i}^*$ denote the converged policies. $\forall i\in \mathcal{N}$ and $\forall x\in \mathcal{X}$, using the formulation from Line \ref{first line of safety policy improvement} of Algorithm \ref{Multi-agent safety policy iteration}, we have
    \begin{equation}
        V_h^{(\pi_{h,i},\pi_{h,-i}^*)}(x)\leq V_h^{(\pi_{h,i}^*,\pi_{h,-i}^*)}(x)=V_h^{\pi_h^*}(x),
    \end{equation}
    indicating that Nash equilibrium is achieved on the safety value function.
\end{proof}

\subsection{Constrained Optimization Problem Formulation}
The identification of CIS is a critical step in designing safe MARL algorithms, as it directly impacts the objective function. For states outside the CIS, the system is guaranteed to violate the constraints regardless of the control inputs. Therefore, it is meaningless to pursue rewards in these regions; instead, the policy should be optimized to improve the safety value function. This adjustment ensures that the system minimizes constraint violations and returns to the CIS as fast as possible. For states within the CIS, the system can safely pursue rewards, provided that the policies do not take any dangerous actions that might drive the system out of CIS.

In the single-agent setting, we can classify a state based on whether it is within the \textbf{maximal} CIS. However, this principle does not directly extend to the multi-agent case, as obtaining the maximal CIS is not feasible, in the sense that optimality is traded off for computational scalability, as indicated in the previous subsection. We refer to the results of multi-agent safety policy iteration as the \textbf{maximal identifiable} CIS, which represents the largest CIS that can be identified by the proposed algorithm in the multi-agent context.
In this setting, there are three types of states: those within the maximal identifiable CIS, those outside the maximal CIS, and those within the maximal CIS but not within the maximal identifiable CIS. Since safety is paramount, we treat the objective for the third type of states as we do for states outside the maximal CIS---aiming to maximize the safety value function to return to the maximal identifiable CIS as soon as possible. 
This is a reasonable consideration, as we lack a suitable mechanism to pursue rewards with safety guarantees for those states. Furthermore, we cannot determine whether this is even feasible, since the last two types of states are inseparable given a locally optimal safety value function.

Let $V^{\pi}$ denote the standard value function of a joint policy $\pi$, which is the expected cumulative reward. Let $V_h^{*}$ denote the locally optimal safety value function achievable in the multi-agent setting. Its zero-superlevel set $S_{\rm{c}}^*$ represents the maximal identifiable CIS, as defined in Definition \ref{Controlled Invariant Set}. Let $\mathbbm{1}_{A}(a)$ denote the indicator function, i.e., $\mathbbm{1}_{A}(a)=1$ if $a\in A$, and $\mathbbm{1}_{A}(a)=0$ otherwise.
The following optimization problem is formulated for state-wise constrained cooperative Markov games.

\begin{equation}
    \label{problem formulation}
    \begin{gathered}
        \max _\pi \mathop{\mathbb{E}}\limits_{x_0\sim d}\left\{V^{\pi}(x_0)\cdot \mathbbm{1}_{S_{\rm{c}}^*}(x_0)+V_h^{\pi}(x_0)\cdot \mathbbm{1}_{\mathcal{X}\setminus S_{\rm{c}}^*}(x_0)\right\} \\
        \begin{aligned}
            \text { s.t. \quad }&x_{t+1}=f\left(x_t, u_t\right), u_t= \pi\left( x_t \right), t\geq0,\\& V_h^*\left(f(x_t, \pi(x_t))\right) \geq 0, \ \forall x_t\in S_{\rm{c}}^*.
        \end{aligned}
    \end{gathered}
\end{equation}
As explained, this formulation has a two-fold objective. We optimize the standard value function only within the maximal identifiable CIS $S_{\rm{c}}^*$, as only in this region can reward maximization be achieved with a solid safety guarantee. Policy constraints are also enforced solely in this region, where the safety value function filters out any dangerous actions that can drive the system outside the maximal identifiable CIS.

Solving the safe MARL problem (\ref{problem formulation}) is challenging. The constraints $V_h^*\left(f(x_t, \pi(x_t))\right) \geq 0$ apply to the joint policy, meaning that all agents face shared constraints when optimizing their individual policies and must coordinate their updates carefully to ensure joint constraint satisfaction. This is non-trivial, as an update by one agent changes the feasible directions for others, potentially causing conflicts.
We must avoid situations where one agent pursues excessive rewards during its update, leaving no safe actions available for others to maintain the joint policy’s ability to constrain the system state within the CIS.
Moreover, $V_h^*$ is unknown, as only the constraint function $h(x)$ is provided. Thus, the identification of CIS must be done simultaneously with constrained policy optimization. In the early stages of the algorithm, the known CIS is imperfect and potentially very small, so we need to design suitable updating mechanisms to address these scenarios. Our goal is to develop an iterative algorithm that not only gradually expands the known CIS but also ensures monotonic improvement of the objective function within these CISs.

\begin{remark}
    One might argue that the proposed multi-agent safety policy iteration could be used to \textit{precompute} the maximal identifiable CIS and then incorporate it as the constraint in the optimization problem (\ref{problem formulation}). This approach is reasonable in the tabular setting; however, in practice, the corresponding deep MARL algorithm would perform poorly, as the policy and constraint function (i.e., the pretrained safety value function) are learned on different state-action distributions, leading to severe out-of-distribution issues.
\end{remark}

\subsection{Iteration Scheme}
In this work, we present a multi-agent dual policy iteration approach to address the aforementioned challenges. Each agent maintains two individual policies: a task policy for solving the constructed problem (\ref{problem formulation}) and a safety policy for collaboratively identifying the CIS. The safety policies are updated using the same approach as in multi-agent safety policy iteration. For task policies, we design a multi-agent task policy iteration scheme that alternates between joint task policy evaluation and multi-agent task policy improvement. In the latter step, the task policies are updated sequentially to improve the dual objective function, subject to additional constraints within the current known CIS, as derived from the safety value function of the joint safety policy.

To present the overall algorithm, first we need the following definition.
\begin{definition}[Invariant Action Set]
    For agent $i\in \mathcal{N}$, given a state $x$, a joint safety policy $\pi_h$, and a joint action from all other agents $u_{-i}$, suppose that $\max \limits_{u_i\in \mathcal{U}_i} {V_h^{\pi_h}\left(f\left(x, \left(u_i, u_{-i}\right)\right)\right)}\geq 0$.
    Then the invariant action set for agent $i$ at state $x$ is defined as
    \begin{equation}
        \mathcal{U}_{i}^{\pi_h}(x, u_{-i})=\left\{u_i\in \mathcal{U}_i \mathrel{}\middle|\mathrel{} V_h^{\pi_h}\left(f\left(x, \left(u_i, u_{-i}\right)\right)\right)\geq 0\right\}.
    \end{equation}
\end{definition}

The invariant action set $\mathcal{U}_{i}^{\pi_h}(x, u_{-i})$ represents the feasible actions for agent $i$ at state $x$, given the actions $u_{-i}$ from other agents. If the agent contributes to the full joint action as $u=(u_i, u_{-i})$, and the system is recoverable from the next state $f(x, u)$ by switching to the safety policy $\pi_h$, then the action $u_i$ is considered safe. Consequently, the size of this set depends on the quality of the joint safety policy. The safer the policy $\pi_h$, the larger the invariant action set, giving the agent more flexibility in selecting actions and potentially leading to higher rewards.

The steps of the proposed multi-agent dual policy iteration scheme are summarized in Algorithm \ref{Multi-agent dual policy iteration}. Note that for the joint task policy evaluation, the performance self-consistency operator $\mathcal{T}^{\pi}$ is defined as $\left[\mathcal{T}^{\pi} (V)\right](x)=r(x, \pi(x))+\gamma V\left(f(x,\pi(x))\right)$.

\begin{algorithm}[htbp]
    \LinesNumbered
    \SetAlgoLined
    \caption{Multi-agent dual policy iteration for state-wise constrained cooperative Markov game}
    \label{Multi-agent dual policy iteration}
    \KwIn{Initial joint safety policy $\pi_h$, initial CIS $S_{\rm{c}}=\varnothing$.}
    \For{$m$ iterations}{
        \For{$k$ iterations}{
            \tcp{Joint safety policy evaluation}
            Same as Line \ref{Safety policy evaluation} in Algorithm \ref{Multi-agent safety policy iteration}.
            
            \tcp{Multi-agent safety policy improvement}
            Same as Lines \ref{first line of full safety policy improvement} to \ref{last line of full safety policy improvement} in Algorithm \ref{Multi-agent safety policy iteration}.
        }

        \tcp{Joint task policy evaluation}

        Solve $V^{\pi} = \mathcal{T}^{\pi}\left(V^{\pi}\right)$ for $V^{\pi}$.

        \tcp{Multi-agent task policy improvement}

        Directly copy the joint safety policy to the joint task policy at states outside the current CIS $S_{\rm{c}}$:\label{first line of the failsafe mechanism}

        \For{each $x \in \mathcal{X}\setminus S_{\rm{c}}$}{
            $\pi^{\rm{new}}(x)\leftarrow \pi_h(x)$
        }\label{last line of the failsafe mechanism}

        Calculate the latest CIS $S_{\rm{c}}^{\rm{new}}=S_{\rm{c}}^{\pi_h}=\left\{x \in \mathcal{X} \mathrel{}\middle|\mathrel{} V_h^{\pi_h}(x)\geq 0\right\}$.

        Randomly shuffle the order of agents $\mathcal{N}$ as $i_{1:n}$.

        Perform the following agent-by-agent sequential update at states inside the latest CIS $S_{\rm{c}}^{\rm{new}}$:
        
        \For{each $x \in S_{\rm{c}}^{\rm{new}}$}{

            {\small$\pi_{i_1}^{\rm{new}}(x) = \underset{u_{i_1} \in  \mathcal{U}_{i_1}^{\pi_h}\left(x, \pi_{-i_{2:n}}(x)\right)}{\operatorname{argmax}} \left\{r\left(x, \left(u_{i_1}, \pi_{-i_{2:n}}(x)\right)\right)+\gamma V^{\pi}\left(f\left(x, \left(u_{i_1}, \pi_{-i_{2:n}}(x)\right)\right)\right)\right\}$}\label{first line of task policy improvement}
            
            {\small$\pi_{i_2}^{\rm{new}}(x) = \underset{u_{i_2} \in  \mathcal{U}_{i_2}^{\pi_h}\left(x, \left(\pi_{i_1}^{\rm{new}}(x), \pi_{-i_{3:n}}(x)\right)\right)}{\operatorname{argmax}} \left\{ r\left(x, \left(\pi_{i_1}^{\rm{new}}(x), u_{i_2}, \pi_{-i_{3:n}}(x)\right)\right)+\gamma V^{\pi}\left(f\left(x, \left(\pi_{i_1}^{\rm{new}}(x), u_{i_2}, \pi_{-i_{3:n}}(x)\right)\right)\right)\right\}$}

            {\small$\ \quad\vdots$}

            {\small$\pi_{i_n}^{\rm{new}}(x) = \underset{u_{i_n} \in  \mathcal{U}_{i_n}^{\pi_h}\left(x, \pi_{i_{1:n-1}}^{\rm{new}}(x)\right)}{\operatorname{argmax}} \left\{ r\left(x, \left(\pi_{i_{1:n-1}}^{\rm{new}}(x), u_{i_n}\right)\right)+\gamma V^{\pi}\left(f\left(x, \left(\pi_{i_{1:n-1}}^{\rm{new}}(x), u_{i_n}\right)\right)\right)\right\}$}\label{last line of task policy improvement}
        }

        Update the current CIS as $S_{\rm{c}}\leftarrow S_{\rm{c}}^{\rm{new}}$.
    }
\end{algorithm}

The computational complexity of both safety policy improvement and task policy improvement is $O(Cn)$, where $C$ is the number of admissible actions for each agent and $n$ is the number of agents. This achieves the same level of computational cost as standard MARL algorithms proposed in \cite{bertsekas2021multiagent, bhattacharya2023multiagent, zhong2024heterogeneous, liu2024maximum}, while additionally providing safety guarantees. We will further show that multi-agent dual policy iteration converges to a generalized Nash equilibrium—a natural extension of the convergence to a Nash equilibrium in \cite{bertsekas2021multiagent, bhattacharya2023multiagent, zhong2024heterogeneous, liu2024maximum}.

\begin{remark}
    There is an extreme case in which the maximal identifiable CIS is empty, i.e., $S_{\rm{c}}^*=\varnothing$. This indicates that the multi-agent safety policy iteration cannot find any safe region within the state space. In this scenario, the proposed formulation and multi-agent dual policy iteration will still work. In problem (\ref{problem formulation}), the objective function reduces to the safety value function, and there are no constraints on the task policy, as they apply only within the CIS, which is now empty. Therefore, this turns into the same case as optimizing for the safety value function. In Algorithm \ref{Multi-agent dual policy iteration}, calculating $S_{\rm{c}}^{\rm{new}}$ will always yield an empty result, and the task policy improvement will involve only Lines \ref{first line of the failsafe mechanism} to \ref{last line of the failsafe mechanism}, which directly copy the joint safety policy to the joint task policy. Algorithm \ref{Multi-agent dual policy iteration} will produce the same outcome as Algorithm \ref{Multi-agent safety policy iteration}, as the problem essentially reduces to the same case. In the following analysis, we will only consider the non-trivial setting where a meaningful $S_{\rm{c}}^*$ can be identified.
\end{remark}

To reveal important properties of the proposed algorithm, we introduce the following definition.

\begin{definition}[Induced Cooperative Markov Game]
    Given a state-wise constrained cooperative Markov game $\mathcal{M} = \left(\mathcal{N}, \mathcal{X}, \mathcal{U}, f, r, h, \gamma, \gamma_h, d\right)$, and a joint safety policy $\pi_h$ specifying a nonempty CIS $S_{\rm{c}}^{\pi_h}$, the induced cooperative Markov game is defined as $\mathcal{M}^{\pi_h} = \left(\mathcal{N}, S_{\rm{c}}^{\pi_h}, \mathcal{U}^{\pi_h}, f, r, \gamma, d_{\rm{c}}\right)$, where  $d_{\rm{c}}$ is the initial state distribution restricted to $S_{\rm{c}}^{\pi_h}$, and the action space $\mathcal{U}^{\pi_h}$ is given by $\mathcal{U}^{\pi_h}=\mathop{\bigcup}_{x}\prod_{i=1}^n \mathcal{U}_{i}^{\pi_h}(x,u_{-i})$.
\end{definition}

In an induced cooperative Markov game, the joint action space is state-dependent, and the individual action spaces are intertwined, i.e., the admissible actions for one agent are influenced by the actions of others. It is noteworthy that, compared to the original state-wise constrained cooperative Markov game $\mathcal{M}$, the induced game $\mathcal{M}^{\pi_h}$ is \textit{unconstrained}, as its tuple does not include a constraint function $h(x)$. The key idea is that state constraints are transformed into state-dependent action spaces, making them easier to handle. Nonetheless, we must be careful with this definition. First, we need to show that the induced game is well-defined, i.e., given actions from the action space, the system will not leave the state space $S_{\rm{c}}^{\pi_h}$. Second, we need to show that the proposed algorithm can manage intertwined individual action spaces and that all agents have a nonempty action space to choose from at each state.

\begin{proposition}
    \label{forward invariance}
    Suppose a joint policy $\pi$ complies with the action space $\mathcal{U}^{\pi_h}$ within the CIS $S_{\rm{c}}^{\pi_h}$, i.e., $(u_1,u_2,\cdots,u_n)=\pi(x)$ satisfies $u_i\in \mathcal{U}_{i}^{\pi_h}(x,u_{-i})$ for all $i\in \mathcal{N}$ and all $x\in S_{\rm{c}}^{\pi_h}$. Then, starting from any state $x_0\in S_{\rm{c}}^{\pi_h}$, the system will never leave the CIS under the joint policy $\pi$.
\end{proposition}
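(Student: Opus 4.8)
The plan is to prove forward invariance of $S_{\rm{c}}^{\pi_h}$ through a one-step containment property followed by induction on the time index $t$. The essential observation is that the compliance condition, invoked at a \emph{single} agent, already certifies safety of the entire joint successor state. Indeed, if $\pi(x)=(u_1,\dots,u_n)$ complies at $x$, then picking any agent $i$, the membership $u_i\in\mathcal{U}_i^{\pi_h}(x,u_{-i})$ unfolds, by the definition of the invariant action set, into $V_h^{\pi_h}\big(f(x,(u_i,u_{-i}))\big)\geq 0$. Since $(u_i,u_{-i})$ is simply the full joint action $\pi(x)$, this is exactly $V_h^{\pi_h}\big(f(x,\pi(x))\big)\geq 0$, i.e., the successor $f(x,\pi(x))$ again lies in the CIS.

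First I would record this one-step containment precisely: for every $x\in S_{\rm{c}}^{\pi_h}$ at which $\pi$ complies, the next state $f(x,\pi(x))$ belongs to $S_{\rm{c}}^{\pi_h}$. The precondition $\max_{u_i\in\mathcal{U}_i}V_h^{\pi_h}(f(x,(u_i,u_{-i})))\geq 0$ required for the invariant action set to be well-defined is automatically satisfied, since compliance asserts that this set is nonempty. I would then run the induction on $t$: the base case $x_0\in S_{\rm{c}}^{\pi_h}$ holds by hypothesis, and for the inductive step, assuming $x_t\in S_{\rm{c}}^{\pi_h}$, the compliance hypothesis---assumed to hold at every state of $S_{\rm{c}}^{\pi_h}$---applies in particular at $x_t$, so one-step containment yields $x_{t+1}=f(x_t,\pi(x_t))\in S_{\rm{c}}^{\pi_h}$. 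Hence $x_t\in S_{\rm{c}}^{\pi_h}$ for all $t\geq 0$, and the trajectory never leaves the CIS.

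The argument involves no heavy computation, so the only point requiring care---and thus the main (modest) obstacle---is making the induction self-sustaining: one must justify invoking compliance at each successive state, which is legitimate precisely because the inductive hypothesis keeps every visited state inside $S_{\rm{c}}^{\pi_h}$, the region over which compliance is assumed in the first place. The conceptual key that makes the one-step step immediate is recognizing that a single agent's invariant-action-set membership already encodes safety of the complete joint action, so there is no need to combine all $n$ agents' conditions. As a side remark, one could also note via the self-consistency condition that membership in $S_{\rm{c}}^{\pi_h}$ implies $h(x)\geq 0$, confirming that staying in the CIS entails genuine constraint satisfaction, though this is not needed for the invariance claim itself.
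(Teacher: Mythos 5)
Your proof is correct and follows essentially the same route as the paper's: the definition of the invariant action set immediately yields $V_h^{\pi_h}\bigl(f(x,\pi(x))\bigr)\geq 0$ for any compliant $x\in S_{\rm{c}}^{\pi_h}$, and induction on the time index completes the argument. Your version is slightly more explicit in noting that a single agent's membership condition already certifies the full joint action, but this is the same observation the paper uses implicitly.
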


\begin{proof}
    Given $x_0\in S_{\rm{c}}^{\pi_h}$, since the joint policy $\pi$ complies with the action space $\mathcal{U}^{\pi_h}$, we have $V_h^{\pi_h}\left(f(x_0,\pi(x_0))\right)\geq 0$ based on the definition of invariant action set. Therefore $V_h^{\pi_h}\left(x_1\right)\geq 0$, where $x_1$ denotes the next state in the trajectory. Based on the definition of CIS, we have $x_1\in S_{\rm{c}}^{\pi_h}$. By induction, the infinite-horizon trajectory starting from $x_0$ will always remain within the CIS.
\end{proof}

\begin{proposition}
    During the execution of Algorithm \ref{Multi-agent dual policy iteration}, the constrained policy updates (Lines \ref{first line of task policy improvement} to \ref{last line of task policy improvement}) in task policy improvement will always be feasible, i.e., the invariant action sets being searched will never be empty.
\end{proposition}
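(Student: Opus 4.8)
The plan is to prove feasibility by nesting two inductions: an inner induction over the $n$ agent-by-agent updates inside a single task policy improvement, and an outer induction over the $m$ iterations that supplies the base case for the inner one. The only two facts I will need are the self-consistency condition (Lemma~\ref{lemma for safety value function}) and the monotonicity of the safety value function along the safety policy iteration, established inside the proof of Theorem~\ref{Convergence of Multi-agent Safety Policy Iteration}.

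Fix an iteration, let $\pi_h$ be the current joint safety policy, let $i_{1:n}$ be the shuffled order, and fix a state $x\in S_{\rm c}^{\rm new}=S_{\rm c}^{\pi_h}$. For $k=1,\dots,n$, let $P(k)$ be the statement that the hybrid action $\left(\pi_{i_{1:k-1}}^{\rm new}(x),\pi_{i_{k:n}}(x)\right)$, using the updated actions for the already-processed agents and the current task actions for the rest, satisfies $V_h^{\pi_h}\!\left(f\!\left(x,\cdot\right)\right)\geq 0$. The heart of the argument is the single implication carrying the inner induction: if $P(k)$ holds, then the current action $\pi_{i_k}(x)$ is itself a witness that the set $\mathcal{U}_{i_k}^{\pi_h}\!\left(x,\left(\pi_{i_{1:k-1}}^{\rm new}(x),\pi_{i_{k+1:n}}(x)\right)\right)$ is nonempty, so agent $i_k$'s $\operatorname{argmax}$ is over a nonempty feasible set and is well defined; because the selected $\pi_{i_k}^{\rm new}(x)$ lies in that very set, the safety inequality is preserved, which is exactly $P(k+1)$. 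Thus all $n$ invariant action sets are nonempty as soon as the base fact $P(1)$ is secured, and moreover $P(n+1)$ records that the fully updated task policy again complies with the invariant action set at $x$.

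It remains to establish $P(1)$, i.e.\ $V_h^{\pi_h}\!\left(f\!\left(x,\pi(x)\right)\right)\geq 0$ for the current task policy $\pi$ at every $x\in S_{\rm c}^{\rm new}$. I would split on whether $x$ lies in the previous CIS $S_{\rm c}$. For $x\in S_{\rm c}^{\rm new}\setminus S_{\rm c}$, the fail-safe copy step has just set $\pi(x)\leftarrow\pi_h(x)$; since $x\in S_{\rm c}^{\pi_h}$ gives $V_h^{\pi_h}(x)\geq 0$, the self-consistency condition forces $V_h^{\pi_h}\!\left(f\!\left(x,\pi_h(x)\right)\right)\geq 0$, so $P(1)$ holds. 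For $x\in S_{\rm c}$ --- which is contained in $S_{\rm c}^{\rm new}$ because the safety value function only grew during the inner loop --- the task action $\pi(x)$ is inherited unchanged from the end of the previous iteration, where, by the outer inductive hypothesis (the $P(n+1)$ conclusion of that iteration), it satisfied $V_h^{\pi_h^{\rm prev}}\!\left(f\!\left(x,\pi(x)\right)\right)\geq 0$. Monotonicity then gives $V_h^{\pi_h}\!\left(f\!\left(x,\pi(x)\right)\right)\geq V_h^{\pi_h^{\rm prev}}\!\left(f\!\left(x,\pi(x)\right)\right)\geq 0$, so $P(1)$ holds here too. The outer induction is seeded by the first iteration, where $S_{\rm c}=\varnothing$ makes the copy step overwrite the task policy by the safety policy everywhere, reducing $P(1)$ to the self-consistency case.

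The main obstacle I anticipate is precisely the $x\in S_{\rm c}$ branch: the inherited task action was certified safe against the previous safety value function, while the current feasibility test uses the updated one, and the two safety policies differ. The resolution rests entirely on the monotonicity from Theorem~\ref{Convergence of Multi-agent Safety Policy Iteration} --- an action safe under an earlier safety value function stays safe under a later one, and the earlier CIS is contained in the current one --- so I would state carefully the exact order of the safety policy update, the recomputation of $S_{\rm c}^{\rm new}$, and the fail-safe copy, so that the ``previous compliance'' hypothesis is pinned to the correct safety value function at each step. I note finally that $P(n+1)$ also re-establishes the hypothesis of Proposition~\ref{forward invariance}, closing the loop between feasibility and forward invariance.
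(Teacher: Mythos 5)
Your proposal is correct and follows essentially the same route as the paper's proof: an induction over iterations whose base case uses the fail-safe copy step (so the safety policy's own action witnesses nonemptiness via self-consistency), and whose inductive step splits $S_{\rm c}^{\rm new}$ into the newly added states (handled like the base case) and the old states $S_{\rm c}$ (handled by combining the previous iteration's compliance with the monotonicity $V_h^{\pi_h}\geq V_h^{\bar{\pi}_h}$). Your explicit inner invariant $P(k)$ merely formalizes the paper's ``this logic extends to all agents'' step, so the two arguments are the same in substance.
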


\begin{proof}
    We will prove this proposition by induction. First, we demonstrate that the initial execution of constrained policy updates is feasible. Then, assuming feasibility in the $m$-th execution, we show that the $(m+1)$-th execution is also feasible.

    The first execution of constrained policy updates happens when a nonempty latest CIS $S_{\rm{c}}^{\rm{new}}$ is obtained. At this point, the current CIS $S_{\rm{c}}=\varnothing$, so Lines \ref{first line of the failsafe mechanism} to \ref{last line of the failsafe mechanism} are executed for all $x\in \mathcal{X}$, making the joint task policy identical to the joint safety policy. At state $x\in S_{\rm{c}}^{\rm{new}}$, for agent $i_1$, we have $V_h^{\pi_h}\left(f\left(x, \left(\pi_{h,i_1}(x),\pi_{-i_1}(x)\right)\right)\right)\geq0$ since $V_h^{\pi_h}(f(x, \pi_h(x)))\geq0$ and $\pi_{-i_1}(x)=\pi_{h,-i_1}(x)$. Thus, there exists an action $u_{i_1}=\pi_{h,i_1}(x)\in \mathcal{U}_{i_1}^{\pi_h}(x, u_{-i_1})$ that agent $i_1$ can choose. For agent $i_2$, we have $V_h^{\pi_h}\left(f\left(x, \left(\pi_{i_1}^{\rm{new}}(x),\pi_{h,i_2}(x),\pi_{-i_{3:n}}(x)\right)\right)\right)\geq0$ since $V_h^{\pi_h}(f(x, \left(\pi_{i_1}^{\rm{new}}(x),\pi_{-i_1}(x)\right)))\geq0$ (the update of agent $i_1$  follows the constraints of $\mathcal{U}_{i_1}^{\pi_h}(x, u_{-i_1})$) and $\pi_{-i_{3:n}}(x)=\pi_{h,-i_{3:n}}(x)$. Therefore, there exists an action $u_{i_2}=\pi_{h,i_2}(x)\in \mathcal{U}_{i_2}^{\pi_h}(x, \left(\pi_{i_1}^{\rm{new}}(x),\pi_{-i_{3:n}}(x)\right))$ that agent $i_2$ can choose. This logic extends to all agents, which shows that the initial execution of constrained policy updates is feasible.

    Assume that the $m$-th execution of constrained policy updates is feasible, with the corresponding joint safety policy denoted as $\bar{\pi}_h$. Since the safety value function is non-decreasing in the multi-agent safety policy iteration, we have $V_h^{\pi_h}(x)\geq V_h^{\bar{\pi}_h}(x)$ for all $x\in \mathcal{X}$. Therefore, $S_{\rm{c}}\subseteq S_{\rm{c}}^{\rm{new}}$. In Lines \ref{first line of the failsafe mechanism} to \ref{last line of the failsafe mechanism}, for states belonging to $S_{\rm{c}}^{\rm{new}}\setminus S_{\rm{c}}$, the joint task policy is also directly copied from the joint safety policy. In the constrained policy updates for these states, the proof of feasibility is similar to the case of the initial execution. This leaves the case of $x\in S_{\rm{c}}$ to address. For agent $i_1$, at state $x\in S_{\rm{c}}$, given that the $m$-th execution of constrained policy updates is feasible, we have $V_h^{\bar{\pi}_h}\left(f\left(x, \left(\pi_{i_1}(x), \pi_{-i_1}(x)\right)\right)\right)\geq0$. Since the safety value function is non-decreasing ($V_h^{\pi_h}(x)\geq V_h^{\bar{\pi}_h}(x)$), it follows that $V_h^{\pi_h}\left(f\left(x, \left(\pi_{i_1}(x), \pi_{-i_1}(x)\right)\right)\right)\geq0$ as well. Therefore, there exists an action $u_{i_1}=\pi_{i_1}(x)\in \mathcal{U}_{i_1}^{\pi_h}(x, \pi_{-i_1}(x))$ that agent $i_1$ can choose. For agent $i_2$, it follows that $V_h^{\pi_h}\left(f\left(x, \left(\pi_{i_1}^{\rm{new}}(x), \pi_{i_2}(x), \pi_{-i_{3:n}}(x)\right)\right)\right)\geq0$. Thus, there exists an action $u_{i_2}=\pi_{i_2}(x)\in \mathcal{U}_{i_2}^{\pi_h}(x, \left(\pi_{i_1}^{\rm{new}}(x), \pi_{-i_{3:n}}(x)\right))$ that agent $i_2$ can select. This reasoning applies to all agents, confirming that the $(m+1)$-th execution of constrained policy updates is feasible.
\end{proof}

Now we are ready to present the main result on the convergence of the multi-agent dual policy iteration. A generalized Nash equilibrium extends the concept of a Nash equilibrium to scenarios where players' decision variables are subject to joint constraints. This equilibrium results in a stable configuration in which no player can unilaterally improve the objective function along a feasible direction---exactly the situation for the proposed algorithm.

\begin{theorem}
    Algorithm \ref{Multi-agent dual policy iteration} converges to a generalized Nash equilibrium in the induced cooperative Markov game $\mathcal{M}^{\pi_h^*}$ on the standard value function. Furthermore, this equilibrium is also a generalized Nash equilibrium for the constructed problem (\ref{problem formulation}) in the original state-wise constrained cooperative Markov game $\mathcal{M}$ with the two-fold objective.
\end{theorem}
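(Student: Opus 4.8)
The plan is to decouple the safety and task dynamics and then reassemble them. First I would invoke Theorem~\ref{Convergence of Multi-agent Safety Policy Iteration}: because the inner loop repeatedly runs the multi-agent safety policy iteration, the joint safety value function is non-decreasing and, over finite state and action spaces, only finitely many distinct policies exist, so the safety policies stabilize after finitely many total updates at a Nash equilibrium $\pi_h^*$ with value $V_h^*=V_h^{\pi_h^*}$. Once this happens the computed CIS freezes at $S_{\rm{c}}^*=\{x\mid V_h^*(x)\ge0\}$, the invariant action sets $\mathcal{U}_i^{\pi_h^*}(x,u_{-i})$ freeze, and consequently the induced cooperative Markov game $\mathcal{M}^{\pi_h^*}$ becomes a single fixed, unconstrained object. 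All task-policy updates in the tail of the algorithm therefore take place inside this one fixed game, so it suffices to analyze task policy iteration there.

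Next I would establish that $\mathcal{M}^{\pi_h^*}$ is a legitimate arena for standard policy iteration. Proposition~\ref{forward invariance} gives forward invariance: any policy complying with $\mathcal{U}^{\pi_h^*}$ keeps every trajectory started in $S_{\rm{c}}^*$ inside $S_{\rm{c}}^*$ forever. Hence for $x\in S_{\rm{c}}^*$ the standard value function $V^\pi(x)$ depends only on the restriction of $\pi$ to $S_{\rm{c}}^*$, and $\mathcal{T}^\pi$ is a $\gamma$-contraction on this closed domain, so $V^\pi$ is well defined. The preceding feasibility proposition guarantees that the invariant action sets searched in the agent-by-agent update are always nonempty and, crucially, that each agent's incumbent action remains feasible at the moment it is its turn to update; in particular the task policy entering the fixed-game phase is itself feasible.

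The technical core is monotonic improvement of $V^\pi$ within $S_{\rm{c}}^*$, carried out exactly as in the proof of Theorem~\ref{Convergence of Multi-agent Safety Policy Iteration} but with the performance quantity $Q^{\pi^k}(x,u)=r(x,u)+\gamma V^{\pi^k}(f(x,u))$ replacing the safety quantity. I would define the intermediate joint policies obtained after each single-agent update and, reading the sequential update (Lines~\ref{first line of task policy improvement}--\ref{last line of task policy improvement}) in reverse, chain the inequalities
\[
Q^{\pi^k}\!\bigl(x,\pi^{k+1}(x)\bigr)\;\ge\;\cdots\;\ge\;Q^{\pi^k}\!\bigl(x,\pi^{k}(x)\bigr)=V^{\pi^k}(x),
\]
where each link uses that the incumbent action of the updating agent lies in its invariant action set, so the maximum over the feasible set is at least the incumbent value. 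Iterating $\mathcal{T}^{\pi^{k+1}}$ and using forward invariance to keep the recursion inside $S_{\rm{c}}^*$ then yields $V^{\pi^{k+1}}(x)\ge V^{\pi^k}(x)$ for all $x\in S_{\rm{c}}^*$. Since $V^{\pi^k}$ is bounded above by the optimal value of $\mathcal{M}^{\pi_h^*}$, the monotone sequence converges; at the limit the agent-by-agent update is stationary, which is precisely the statement that no agent can raise $V^\pi$ by unilaterally deviating within $\mathcal{U}_i^{\pi_h^*}(x,\pi_{-i}^*(x))$, i.e., a generalized Nash equilibrium of $\mathcal{M}^{\pi_h^*}$ on the standard value function.

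Finally I would lift this to the two-fold objective of problem~(\ref{problem formulation}). For $x\in S_{\rm{c}}^*$ the objective reduces to $V^\pi(x)$ and the state constraint $V_h^*(f(x,\pi(x)))\ge0$ is, by definition of the invariant action set, identical to compliance with $\mathcal{U}^{\pi_h^*}$; hence the equilibrium just established is exactly a generalized Nash equilibrium for the in-CIS part of (\ref{problem formulation}). For $x\in\mathcal{X}\setminus S_{\rm{c}}^*$ the objective term is $V_h^\pi(x)$ and the failsafe copy (Lines~\ref{first line of the failsafe mechanism}--\ref{last line of the failsafe mechanism}) sets the task policy equal to $\pi_h^*$, which by Theorem~\ref{Convergence of Multi-agent Safety Policy Iteration} is a Nash equilibrium on the safety value function, so no agent can improve $V_h^\pi(x)$ unilaterally either. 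Combining the two regions shows the converged joint policy is a generalized Nash equilibrium for (\ref{problem formulation}). The main obstacle I anticipate is the monotonic-improvement step under coupled, state-dependent action constraints: the chain of inequalities only closes if each agent's previous action stays feasible throughout the sequential sweep, so the argument hinges on invoking the feasibility proposition at every link, and on using forward invariance to confine the value recursion to $S_{\rm{c}}^*$ so that the out-of-CIS objective never contaminates the in-CIS analysis.
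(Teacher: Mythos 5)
Your proposal is correct and follows essentially the same route as the paper's proof: convergence of the safety thread via Theorem~\ref{Convergence of Multi-agent Safety Policy Iteration}, the induced game $\mathcal{M}^{\pi_h^*}$ with forward invariance and nonempty invariant action sets, monotone improvement of $V^{\pi}$ on $S_{\rm{c}}^*$ via the reversed chain of sequential-update inequalities bounded by the optimal value of the induced game, and the split of the two-fold objective into the in-CIS generalized equilibrium and the out-of-CIS safety equilibrium. Your only addition---using finiteness of the policy space to argue the safety policy and hence the induced game freeze after finitely many updates---is a slight tightening of the paper's looser ``as the iteration proceeds, the task policy converges on the maximal induced game'' step, but it does not change the argument's structure.
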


\begin{proof}
    There are two threads in multi-agent dual policy iteration. The first thread is the multi-agent safety policy iteration, which operates independently of the task policy. Its convergence is proven in Theorem \ref{Convergence of Multi-agent Safety Policy Iteration}: the joint safety policy converges to a Nash equilibrium on the safety value function, corresponding to the maximal identifiable CIS $S_{\rm{c}}^*$. This leaves us with the second thread, the multi-agent task policy iteration, which is subject to the constraints provided by the first thread throughout the iteration process.

    Given the joint safety policy $\pi_h$, we define the induced cooperative Markov game $\mathcal{M}^{\pi_h} = \left(\mathcal{N}, S_{\rm{c}}^{\rm{new}}, \mathcal{U}^{\pi_h}, f, r, \gamma, d_{\rm{c}}\right)$, where we use $S_{\rm{c}}^{\rm{new}}$ interchangeably with $S_{\rm{c}}^{\pi_h}$ to better align with the notation in Algorithm \ref{Multi-agent dual policy iteration}. In this induced game, we show that the standard value function of the joint task policy $\pi$ is non-decreasing.
    Suppose $i_{1:n}$ is the optimization order of agents in the $m$-th iteration of task policy. $\forall x\in S_{\rm{r}}^{\rm{new}}$,  we have
    \begin{equation}
        \label{recursive proof of task policy improvement}
        \begin{aligned}
            &r\left(x,\pi^{k+1}(x)\right)+\gamma V^{\pi^k}\left(f\left(x, \pi^{k+1}(x)\right)\right) \\
            \geq& r\left(x, \left(\pi^{k+1}_{i_{1:n-1}}(x), \pi_{i_{n}}^k(x)\right)\right)+\gamma V^{\pi^k}\left(f\left(x, \left(\pi^{k+1}_{i_{1:n-1}}(x), \pi_{i_{n}}^k(x)\right)\right)\right) \\
            \geq& r\left(x, \left(\pi^{k+1}_{i_{1:n-2}}(x), \pi_{i_{n-1:n}}^k(x)\right)\right)+\gamma V^{\pi^k}\left(f\left(x, \left(\pi^{k+1}_{i_{1:n-2}}(x), \pi_{i_{n-1:n}}^k(x)\right)\right)\right) \\
            \geq& \ldots \\
            \geq&r\left(x,\pi^{k}(x)\right)+\gamma V^{\pi^k}\left(f\left(x, \pi^{k}(x)\right)\right) .
        \end{aligned}
    \end{equation}
    The inequalities in (\ref{recursive proof of task policy improvement}) are obtained by applying the definition of constrained policy updates (Lines \ref{first line of task policy improvement} to \ref{last line of task policy improvement}) backwards. Then, utilizing (\ref{recursive proof of task policy improvement}) and the self-consistency condition of the standard value function along the infinite-horizon trajectory, $\forall x_0\in S_{\rm{c}}^{\rm{new}}$, we have
    \begin{equation}
        \begin{aligned}
            V^{\pi^k}\left(x_0\right) = & r\left(x_0,\pi^{k}(x_0)\right)+\gamma V^{\pi^k}\left(f\left(x_0, \pi^{k}(x_0)\right)\right) \\
            \leq& r\left(x_0,\pi^{k+1}(x_0)\right)+\gamma V^{\pi^k}\left(f\left(x_0, \pi^{k+1}(x_0)\right)\right) \\
            = & r\left(x_0,\pi^{k+1}(x_0)\right)+\gamma \left(r\left(x_1, \pi^{k+1}(x_1)\right)+\gamma V^{\pi^k}\left(f\left(x_1, \pi^{k}(x_1)\right)\right) \right) \\
            \leq& r\left(x_0,\pi^{k+1}(x_0)\right)+\gamma \left(r\left(x_1, \pi^{k+1}(x_1)\right)+\gamma V^{\pi^k}\left(f\left(x_1, \pi^{k+1}(x_1)\right)\right) \right) \\
            \leq& \ldots \\
            \leq & \sum_{t=0}^{\infty} \gamma^t\left(x_t, \pi_{k+1}\left(x_t\right)\right)\\
            =& V^{\pi^{k+1}}\left(x_0\right).
        \end{aligned}
    \end{equation}
    in which $x_{t+1}=f\left(x_{t}, \pi^{k+1}\left(x_{t}\right)\right)$ for $t\geq0$.
    Since $\mathcal{M}^{\pi_h}$ is well-defined, the standard value function of any joint policy is upper bounded by the globally optimal value function $V^{\rm{opt}}$ in the induced game, which is the fixed point of the Bellman equation $V^{\rm{opt}}(x)=\max \limits_{u\in \mathcal{U}}\left\{r(x,u)+\gamma V^{\rm{opt}}\left(f(x,u)\right)\right\}$. Therefore, the standard value function sequence generated by the multi-agent task policy iteration is both non-decreasing and upper bounded on a fixed induced game $\mathcal{M}^{\pi_h}$.
    As the iteration proceeds, the joint safety policy converges  to $\pi_h^*$, and the CIS converges to $S_{\rm{c}}^*$. Consequently, the task policy will converge on the maximal induced game $\mathcal{M}^{\pi_h^*}$.

    The convergence implies that the joint task policy $\pi$ will remain the same when performing the constrained policy updates. Denote the converged joint task policy as $\pi^*=\prod \pi_i^*$. $\forall i\in \mathcal{N}$ and $\forall x\in S_{\rm{c}}^*$, using the formulation from Lines \ref{first line of task policy improvement} to \ref{last line of task policy improvement} of Algorithm \ref{Multi-agent dual policy iteration} and the definition of invariant action set, we have
    \begin{equation}
        V^{(\pi_{i},\pi_{-i}^*)}(x)\leq V^{(\pi_{i}^*,\pi_{-i}^*)}(x)=V^{\pi^*}(x),
    \end{equation}
    in which $\pi_{i}$ satisfies that $V_h^{\pi_h^*}\left(f\left(x, \left(\pi_{i}(x), \pi_{-i}^*(x)\right)\right)\right)\geq0$. This indicates that agent $i$ cannot unilaterally improve the value function by selecting a different action among all feasible options, consistent with the definition of a generalized Nash equilibrium.

    Moreover, for states outside $S_{\rm{c}}^*$, the converged joint task policy $\pi^*$ is the same as the joint safety policy $\pi_h^*$, which achieves a Nash equilibrium on the safety value function. The two-fold objective in problem (\ref{problem formulation}) includes cases for $x\in S_{\rm{c}}^*$ and $x\in \mathcal{X}\setminus S_{\rm{c}}^*$, with constraints only active for the former. Since $\pi^*$ achieves a generalized Nash equilibrium and a Nash equilibrium for these two components, respectively, and the overall objective is the weighted sum of these components with respect to the initial state distribution $d$, it follows that $\pi^*$ is also a generalized Nash equilibrium for the constructed problem (\ref{problem formulation}) in the original state-wise constrained cooperative Markov game $\mathcal{M}$.
\end{proof}

One might wonder: since in Algorithm \ref{Multi-agent dual policy iteration}, we only explicitly identify the region where the joint safety policy $\pi_h$ is safe (i.e., the CIS $S_{\rm{c}}^{\pi_h}$), does the outcome of the constrained policy updates truly make the joint task policy $\pi$ safe? In other words, what is the safe region of the joint task policy $\pi$ (i.e., $S_{\rm{c}}^{\pi}$)?
The following proposition provides the answer.

\begin{proposition}
    The CIS $S_{\rm{c}}^{\pi}$ associated with the joint task policy $\pi$ in Algorithm \ref{Multi-agent dual policy iteration} is non-shrinking throughout the iterative process and will converge to the same maximal identifiable CIS as the joint safety policy $\pi_h$, denoted by $S_{\rm{c}}^*$.
\end{proposition}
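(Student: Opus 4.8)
The plan is to prove the stronger statement that $S_{\rm{c}}^{\pi}=S_{\rm{c}}^{\pi_h}$ at the end of every outer iteration of Algorithm \ref{Multi-agent dual policy iteration}, and then read off both the non-shrinking property and the limit directly from the corresponding facts already established for $S_{\rm{c}}^{\pi_h}$. First I would characterize the joint task policy produced in a single outer iteration. Because $S_{\rm{c}}\subseteq S_{\rm{c}}^{\rm{new}}$ (shown in the preceding feasibility proposition), every state outside $S_{\rm{c}}^{\rm{new}}=S_{\rm{c}}^{\pi_h}$ lies in $\mathcal{X}\setminus S_{\rm{c}}$ and is therefore assigned $\pi(x)=\pi_h(x)$ by the failsafe copy (Lines \ref{first line of the failsafe mechanism}--\ref{last line of the failsafe mechanism}) without being overwritten, whereas every state inside $S_{\rm{c}}^{\pi_h}$ is updated by a constrained step that complies with $\mathcal{U}^{\pi_h}$. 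Hence $\pi$ agrees with $\pi_h$ on $\mathcal{X}\setminus S_{\rm{c}}^{\pi_h}$ and complies with the invariant action sets on $S_{\rm{c}}^{\pi_h}$.

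For the inclusion $S_{\rm{c}}^{\pi_h}\subseteq S_{\rm{c}}^{\pi}$, I would invoke Proposition \ref{forward invariance}: since $\pi$ complies with $\mathcal{U}^{\pi_h}$ inside $S_{\rm{c}}^{\pi_h}$, any task trajectory starting in $S_{\rm{c}}^{\pi_h}$ remains there for all time. Noting that $S_{\rm{c}}^{\pi_h}\subseteq S_h$, which is immediate from the self-consistency condition \eqref{Self-consistency Condition for safety value function} together with $\gamma_h>0$, every state on such a trajectory satisfies $h(x_t)\geq 0$, so $V_h^{\pi}(x_0)=\min_t\gamma_h^{t+1}h(x_t)\geq 0$ and $x_0\in S_{\rm{c}}^{\pi}$.

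The reverse inclusion is the main obstacle and requires tracking where the task and safety trajectories diverge. For $x_0\notin S_{\rm{c}}^{\pi_h}$ we have $V_h^{\pi_h}(x_0)<0$, so the safety trajectory $\{x_t^h\}$ incurs a violation $h(x_{t^*}^h)<0$ at some time $t^*$. The key observation is that, applying Proposition \ref{forward invariance} to $\pi_h$ itself, once the safety trajectory enters $S_{\rm{c}}^{\pi_h}$ at some first time $T$ it stays inside $S_{\rm{c}}^{\pi_h}\subseteq S_h$ and thus satisfies $h\geq 0$ thereafter; consequently the violation must occur before entry, i.e.\ $t^*<T$. Since $\pi=\pi_h$ on $\mathcal{X}\setminus S_{\rm{c}}^{\pi_h}$, a short induction shows the task and safety trajectories coincide up to time $T$, so the same violation $h(x_{t^*})=h(x_{t^*}^h)<0$ appears on the task trajectory, giving $V_h^{\pi}(x_0)<0$ and $x_0\notin S_{\rm{c}}^{\pi}$ (the case $T=\infty$ is handled identically since the trajectories then coincide everywhere). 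Combining both inclusions yields $S_{\rm{c}}^{\pi}=S_{\rm{c}}^{\pi_h}$.

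Finally, the two claimed properties follow at once. The feasibility proposition already established that the safety value function is non-decreasing across outer iterations, so $S_{\rm{c}}^{\pi_h}$ is non-shrinking; by the equality just proven, so is $S_{\rm{c}}^{\pi}$. By Theorem \ref{Convergence of Multi-agent Safety Policy Iteration} the safety policy converges to $\pi_h^*$ with $S_{\rm{c}}^{\pi_h}\to S_{\rm{c}}^*$, whence $S_{\rm{c}}^{\pi}\to S_{\rm{c}}^*$ as well. The delicate point throughout is the bookkeeping in the reverse inclusion: I must use forward invariance for the safety policy (to locate the violation strictly before the entry time) and for the task policy (to guarantee safety after entry), and carefully confirm that the two trajectories genuinely coincide up to that entry time.
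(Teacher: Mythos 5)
Your proof is correct and follows the same overall strategy as the paper's: establish $S_{\rm{c}}^{\pi}=S_{\rm{c}}^{\pi_h}$ at the end of each outer iteration and then inherit both the non-shrinking property and the limit $S_{\rm{c}}^*$ from the safety thread. The forward inclusion $S_{\rm{c}}^{\pi_h}\subseteq S_{\rm{c}}^{\pi}$ (via Proposition \ref{forward invariance}, the fact that $S_{\rm{c}}^{\pi_h}\subseteq S_h$, and the definition of $V_h^{\pi}$) is identical to the paper's. Where you genuinely improve on the printed argument is the reverse inclusion. The paper disposes of it in one sentence---for $x\notin S_{\rm{c}}^{\pi_h}$ the task policy copies the safety policy, ``which cannot keep the system indefinitely safe,'' hence $x\notin S_{\rm{c}}^{\pi}$---which tacitly treats the $\pi$-trajectory and the $\pi_h$-trajectory from such a state as one and the same. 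They are not: they agree only while they remain outside $S_{\rm{c}}^{\pi_h}$ and may diverge after entering it, since inside the CIS $\pi$ is produced by the constrained reward-maximizing update rather than by copying $\pi_h$. Your bookkeeping closes exactly this gap: forward invariance of $S_{\rm{c}}^{\pi_h}$ under $\pi_h$ itself (which complies with $\mathcal{U}^{\pi_h}$ by the self-consistency condition \eqref{Self-consistency Condition for safety value function}) forces the violation $h(x^h_{t^*})<0$ to occur strictly before the first entry time $T$ into $S_{\rm{c}}^{\pi_h}$, and the two trajectories coincide up to $T$ because $\pi=\pi_h$ off $S_{\rm{c}}^{\pi_h}$, so the same violation is incurred under $\pi$ and $V_h^{\pi}(x_0)<0$. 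Your initial observation that states outside $S_{\rm{c}}^{\rm{new}}$ receive the failsafe copy without being overwritten (using $S_{\rm{c}}\subseteq S_{\rm{c}}^{\rm{new}}$ from the preceding feasibility proposition) is also a detail the paper leaves implicit. In short: same route, but your treatment of the reverse inclusion is tighter than the paper's and is the right way to justify that step.
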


\begin{proof}
    After the constrained policy updates in Algorithm \ref{Multi-agent dual policy iteration}, the joint task policy $\pi$ complies with the action space $\mathcal{U}^{\pi_h}$ within the CIS $S_{\rm{c}}^{\pi_h}$, i.e., $(u_1,u_2,\cdots,u_n)=\pi(x)$ satisfies $u_i\in \mathcal{U}_{i}^{\pi_h}(x,u_{-i})$ for all $i \in \mathcal{N}$ and $x \in S_{\rm{c}}^{\pi_h}$. By Proposition \ref{forward invariance}, the system will thus remain within the CIS under $\pi$.
    Since the CIS is a subset of the constraint set $S_h$, we have $h(x) \geq 0$ for any trajectory starting from $x \in S_{\rm{c}}^{\pi_h}$ and driven by $\pi$.
    Using the definition of safety value function, we have $V_h^{\pi}(x)=\min \limits_{t \in \mathbb{N}}\left\{\gamma_h^{t+1}h\left(x_t\right)\right\}$, where $\left\{x_{t}\right\}$ represents the state trajectory driven by $\pi$ starting from $x$.
    Consequently, for all $x \in S_{\rm{c}}^{\pi_h}$, $V_h^{\pi}(x) \geq 0$, indicating $x \in S_{\rm{c}}^{\pi}$. For $x \notin S_{\rm{c}}^{\pi_h}$, the joint task policy $\pi$ copies the joint safety policy $\pi_h$ (which cannot keep the system indefinitely safe in these states), so $x \notin S_{\rm{c}}^{\pi}$ as well. This indicates that the CIS $S_{\rm{c}}^{\pi}$ is equivalent to $S_{\rm{c}}^{\pi_h}$. The conclusion follows from the fact that $S_{\rm{c}}^{\pi_h}$ is non-shrinking and converges to the maximal identifiable CIS $S_{\rm{c}}^*$.
\end{proof}

\section{Multi-Agent Dual Actor-Critic}

In this section, we present a safe MARL algorithm, multi-agent dual actor-critic (MADAC), for practical deployment in complex high-dimensional systems, which approximates the optimal solution of multi-agent dual policy iteration within a deep RL framework. To facilitate exploration in high-dimensional continuous spaces and enhance performance, we integrate our algorithm with HASAC \cite{liu2024maximum}, the multi-agent counterpart of the single-agent SAC algorithm \cite{haarnoja2018soft}.

For agent $i$, the individual task policy network is denoted by $\pi_i\left(x;\theta_i\right)$, and the individual safety policy network by $\pi_{h,i}\left(x;\phi_i\right)$. Following the double Q-network design from SAC and HASAC, our algorithm employs two value networks and two safety value networks. The value networks are denoted as $Q\left(x,u;\omega_1\right)$ and $Q\left(x,u;\omega_2\right)$, and the safety value networks as $Q_{h}\left(x,u;\psi_1\right)$ and $Q_{h}\left(x,u;\psi_2\right)$. Since the invariant action sets cannot be computed or traversed in the continuous case, we introduce a Lagrange multiplier $\lambda_i$ to facilitate constrained policy optimization.

Given a sample collection $\mathcal{D}$, the loss functions of value networks are
\begin{equation}
    L_Q\left(\omega_i\right)=\mathbb{E}_{\left(x, u, r, x^{\prime}\right) \sim \mathcal{D}}\left\{\left(Q(x,u,a;\omega_i)-\hat{Q}\right)^2\right\},
\end{equation}
where $i\in\left\{1,2\right\}$. The target value $\hat{Q}$ is computed as
\begin{equation}
    \hat{Q}=r(x, u)+\gamma\left( \min_{j\in\left\{1,2\right\}}\left\{Q\left(x^{\prime},u^{\prime};\hat{\omega}_j\right)\right\}-\alpha \log \pi\left(u^{\prime}|x^{\prime};\theta\right)\right),
\end{equation}
in which $\hat{\omega}_j$ represents the target network parameters, $\alpha$ is the temperature, and $\pi\left(u^{\prime}|x^{\prime};\theta\right)$ is given by
\begin{equation}
    \pi\left(u^{\prime}|x^{\prime};\theta\right)=\prod_{i=1}^{n}\pi_i\left(u_i^{\prime}\mid x^{\prime};\theta_i\right),
\end{equation}
where $u_i^{\prime}\sim\pi_i\left(\cdot|x^{\prime};\theta_i\right)$. The loss functions of safety value networks are
\begin{equation}
    L_{Q_h}(\psi_i) = \mathbb{E}_{(x,u,h,x^{\prime}) \sim \mathcal{D}}\left\{\left(Q_h(x,u,a ; \psi_i)-\hat{Q}_h\right)^2\right\},
\end{equation}
where $i\in\left\{1,2\right\}$. The target safety value $\hat{Q}_h$ is
\begin{equation}
    \hat{Q}_h = \gamma_h \min \left\{h(x),  \min_{j\in\left\{1,2\right\}} \left\{Q_h\left(x^{\prime}, u^{\prime};\hat{\psi}_j\right)\right\}\right\},
\end{equation}
in which $u^{\prime}=\left(\pi_{h,1}(x^{\prime};\phi_1),\cdots,\pi_{h,n}(x^{\prime};\phi_n)\right)$, and $\hat{\psi}_j$ represents the target network parameters.

We conduct agent-by-agent sequential updates for task policies and safety policies following the methodology in Algorithm \ref{Multi-agent dual policy iteration}. Given a specific agent order $i_{1:n}$, the loss function of the $i_k$-th agent's safety policy is
\begin{equation}
    \begin{aligned}
        &L_{\pi_{h,i_k}}\left(\phi_{i_k}\right) = \mathbb{E}_{x\sim\mathcal{D}}\left\{-\min_{j\in\left\{1,2\right\}}\left\{Q_h\left(x,\left(\bar{u}_{i_{1:k-1}}^{\rm{new}},u_{i_k},\bar{u}_{i_{k+1:n}}\right);\psi_j\right)\right\}\right\},
    \end{aligned}
\end{equation}
where $\bar{u}_{i_{1:k-1}}^{\rm{new}}$ is given by
\begin{equation}
    \bar{u}_{i_{1:k-1}}^{\rm{new}}=\left(\pi_{h,i_1}(x;\phi^{\rm{new}}_{i_1}),\cdots,\pi_{h,i_{k-1}}(x;\phi^{\rm{new}}_{i_{k-1}})\right),
\end{equation}
$u_{i_k}= \pi_{h,i_k}(x;\phi_{i_k})$, and $\bar{u}_{i_{k+1:n}}$ is given by
\begin{equation}
    \bar{u}_{i_{k+1:n}}=\left(\pi_{h,i_{k+1}}(x;\phi_{i_{k+1}}),\cdots,\pi_{h,i_n}(x;\phi_{i_n})\right).
\end{equation}
The notation $\bar{u}$ indicates that there is no gradient back-propagation through $u$. The notation $\phi^{\rm{new}}_{i_m}$ denotes that we are using the updated parameters, i.e.,
\begin{equation}
    \phi^{\rm{new}}_{i_m}= \phi_{i_m} - \beta \nabla_{\phi_{i_m}} L_{\pi_{h,i_m}}\left(\phi_{i_m}\right),
\end{equation}
in which $\beta$ is the learning rate. The loss function of the $i_k$-th agent's task policy contains two components:
\begin{equation}
    L_{\pi_{i_k}}\left(\theta_{i_k}\right)=L_{\pi_{i_k}}^{\rm{in}}\left(\theta_{i_k}\right)+L_{\pi_{i_k}}^{\rm{out}}\left(\theta_{i_k}\right).
\end{equation}
$L_{\pi_{i_k}}^{\rm{in}}\left(\theta_{i_k}\right)$ and $L_{\pi_{i_k}}^{\rm{out}}\left(\theta_{i_k}\right)$ correspond to the $x\in S_{\rm{c}}^{\rm{new}}$ and $x\notin S_{\rm{c}}^{\rm{new}}$ cases in Algorithm \ref{Multi-agent dual policy iteration}, respectively. We categorize the state samples into two groups: if $Q_h(x,\pi_{h}(x;\phi);\psi_1) \geq 0$, $x\in\mathcal{D}_{\rm{in}}$; otherwise $x\in\mathcal{D}_{\rm{out}}$. Then, $L_{\pi_{i_k}}^{\rm{out}}\left(\theta_{i_k}\right)$ is given by
\begin{equation}
    L_{\pi_{i_k}}^{\rm{out}}\left(\theta_{i_k}\right)=\mathbb{E}_{x\sim\mathcal{D}_{\rm{out}}}\left\{\left(u_{i_k}-\bar{u}_{i_k}\right)^2\right\},
\end{equation}
where $u_{i_k}\sim \pi_{i_k}(x;\theta_{i_k})$ and $\bar{u}_{i_k}=\pi_{h,i_k}(x;\phi_{i_k})$. This loss function design encourages the task policy to mimic the safety policy at states outside the current CIS.
For states inside the current CIS, we need to conduct constrained policy optimization.
$L_{\pi_{i_k}}^{\rm{in}}\left(\theta_{i_k}\right)$ is given by
\begin{equation}
    \begin{aligned}
        &L_{\pi_{i_k}}^{\rm{in}}\left(\theta_{i_k}\right)=\mathbb{E}_{x\sim\mathcal{D}_{\rm{in}}}\left\{\alpha \log \pi_{i_k}(u_{i_k} \mid x;\theta_{i_k})- \min_{j\in\left\{1,2\right\}}\left\{Q(x, u;\omega_j)\right\}\right\} + \mathbb{E}_{x\sim\mathcal{D}_{\rm{in}}}\left\{-\lambda_{i_k} \min_{j\in\left\{1,2\right\}}\left\{Q_h(x, u;\psi_j)\right\}\right\},
    \end{aligned}
\end{equation}
in which $u=\left(\bar{u}_{i_{1:k-1}}^{\rm{new}},u_{i_k},\bar{u}_{i_{k+1:n}}\right)$,
where $\bar{u}_{i_{1:k-1}}^{\rm{new}}$ is given by
\begin{equation}
    \begin{aligned}
        &\bar{u}_{i_{1:k-1}}^{\rm{new}}=\left(\bar{u}_{i_{1}}^{\rm{new}},\cdots,\bar{u}_{i_{k-1}}^{\rm{new}}\right),\\&
        \bar{u}_{i_{m}}^{\rm{new}}\sim \pi_{i_m}(x;\theta_{i_m}^{\rm{new}}),\quad 1\leq m\leq k-1,
    \end{aligned}
\end{equation}
$u_{i_k}\sim \pi_{i_k}(x;\theta_{i_k})$, and $\bar{u}_{i_{k+1:n}}$ is given by
\begin{equation}
    \begin{aligned}
        &\bar{u}_{i_{k+1:n}}=\left(\bar{u}_{i_{k+1}},\cdots,\bar{u}_{i_n}\right),\\&
        \bar{u}_{i_{m}}\sim \pi_{i_m}(x;\theta_{i_m}),\quad k+1\leq m\leq n.
    \end{aligned}
\end{equation}
Similar to the case of safety policies, the notation $\theta^{\rm{new}}_{i_m}$ denotes that we are using the updated parameters, i.e.,
\begin{equation}
    \theta^{\rm{new}}_{i_m}= \theta_{i_m} - \beta \nabla_{\theta_{i_m}} L_{\pi_{i_m}}\left(\theta_{i_m}\right).
\end{equation}
And the notation $\bar{u}$ indicates that there is no gradient back-propagation through $u$.
The loss function for the Lagrange multiplier $\lambda_{i_k}$ is given by
\begin{equation}
    L_{\lambda_{i_k}} = 
    \mathbb{E}_{x\sim\mathcal{D}_{\rm{in}}}\left\{\lambda_{i_k} \min_{j\in\left\{1,2\right\}}\left\{Q_h(x, u;\psi_j)\right\}\right\},
\end{equation}
where the notations are consistent with those in the task policy loss function.

\begin{algorithm}[t]
    \LinesNumbered
    \SetAlgoLined
    \caption{Multi-Agent Dual Actor-Critic}
    \label{MADAC}
    \KwIn{network parameters $\omega_j$, $\psi_j$, $\hat{\psi}_j\leftarrow\psi_j$, $\hat{\omega}_j\leftarrow\omega_j$, $j\in\left\{1,2\right\}$, $\theta_i$, $\phi_i$, $i\in\mathcal{N}$, temperature $\alpha$, learning rate $\beta$, target smoothing coefficient $\tau$, replay buffer $\mathcal{D}\leftarrow \varnothing$.}
    \For{each iteration}{
        \For{each system step}{
            Sample control input $u_t\sim \prod_{i=1}^{n}\pi_i\left(x;\theta_i\right)$;

            Observe next state $x_{t+1}$, reward $r_t$, constraint value $h_t$;

            Store transition $\mathcal{D} \leftarrow \mathcal{D} \cup\left\{\left(x_t, u_t, r_t, h_t, x_{t+1}\right)\right\}$.
        }

        \For{each gradient step}{
            Sample a batch of data from $\mathcal{D}$;

            Update safety value functions $\psi_j \leftarrow \psi_j-\beta \nabla_{\psi_j} L_{Q_h}(\psi_j)$ for $j\in\left\{1,2\right\}$;

            Update value functions $\omega_j \leftarrow \omega_j-\beta \nabla_{\omega_j} L_{Q}(\omega_j)$ for $j\in\left\{1,2\right\}$;

            Randomly shuffle the order of agents $\mathcal{N}$ as $i_{1:n}$.

            \For {each agent $i_k$}{
                Update safety policy $\phi_{i_k} \leftarrow \phi_{i_k} - \beta \nabla_{\phi_{i_k}} L_{\pi_{h,i_k}}\left(\phi_{i_k}\right)$;

                Update task policy $\theta_{i_k} \leftarrow \theta_{i_k} - \beta \nabla_{\theta_{i_k}} L_{\pi_{i_k}}\left(\theta_{i_k}\right)$;

                Update Lagrange multiplier $\lambda_{i_k} \leftarrow \lambda_{i_k} - \beta \nabla_{\lambda_{i_k}} L_{\lambda_{i_k}}$.
            }

            Update target networks $\hat{\psi}_j \leftarrow \tau \psi_j+(1-\tau) \hat{\psi}_j$, $\hat{\omega}_j \leftarrow \tau \omega_j+(1-\tau) \hat{\omega}_j$ for $j\in\left\{1,2\right\}$.
        }
    }
\end{algorithm}

The overall procedures of MADAC are summarized in Algorithm \ref{MADAC}.

\section{Experiments}

In this section, we evaluate the proposed MADAC algorithm on safety-critical MARL benchmarks, comparing it with state-of-the-art MARL algorithms in terms of reward maximization and safety preservation.

The safety-critical benchmarks used in our experiments are based on MuJoCo \cite{todorov2012mujoco}, following the setups from \cite{gu2023safe} and \cite{ji2023safety}. We utilize three robots—HalfCheetah, Walker2D, and Ant—each with action space partitioned into two multi-agent configurations, resulting in a total of six different environments. Detailed information is provided below.

\textbf{HalfCheetah} is a two-dimensional robot with nine links and eight joints connecting them (including two paws), as shown in Fig. \ref{HalfCheetah pic}.
The observation space consists of 17 dimensions.
The goal is to make the robot move forward as fast as possible. The reward function includes a term proportional to the change in $x$ coordinate and a small penalty for control energy consumption. State constraints are defined in two areas: the robot must avoid tipping over, with the torso angle $\theta$ constrained to $-0.3\leq \theta \leq 0.3$, and it must not exceed a specified maximum speed, with the torso velocity $v$ limited to $v \leq 2.5$.
The joint action space has six dimensions, representing the torques applied to six joints. We configure this robot in two ways: \textbf{HalfCheetah-2x3}, where two agents each control three joints, and \textbf{HalfCheetah-3x2}, where three agents each control two joints.

\textbf{Walker2D} is a two-dimensional robot with four main body parts (one torso, two thighs, and two legs), as shown in Fig. \ref{Walker2D pic}.
The observation space is also 17-dimensional. The task is to make the robot run forward as fast as possible. The reward function includes a term proportional to the change in $x$ coordinate, along with a small penalty for energy used in control. The state constraints ensure the robot remains upright, with the torso height $z$ constrained to $1.0\leq z \leq 1.8$, and that it does not exceed a maximum speed, limiting the torso velocity $v$ to $v \leq 1.5$.
The joint action space has six dimensions corresponding to the torque inputs at the hinge joints. We define two configurations for this robot: \textbf{Walker2D-2x3}, where two agents control three joints each, and \textbf{Walker2D-3x2}, where three agents control two joints each.

\textbf{Ant} is a three-dimensional robot with a central torso and four legs, each consisting of two links, as shown in Fig. \ref{Ant pic}.
The observation space is 111-dimensional. The objective is to make the robot move forward along the $x$ axis as quickly as possible. The reward function includes a term proportional to the change in the $x$ coordinate, a small penalty for energy consumption, and a minor penalty for the contact force strength with the ground.
The state constraints ensure that the robot does not fall to the ground, with the torso height $z$ constrained to $0.2\leq z \leq 1.0$, and the torso's rotational speed $\omega$ limited to $\omega \geq -0.7$. Additionally, the robot must avoid collisions with zigzag walls, as designed in \cite{gu2023safe}.
The joint action space is eight-dimensional, representing the torques applied at the hinge joints. We define two configurations for this robot: \textbf{Ant-2x4} and \textbf{Ant-4x2}. In the former, two agents control four joints each, and in the latter, four agents control two joints each.

\begin{figure}[b]
    \centering
    \subfloat[HalfCheetah]{
        \label{HalfCheetah pic}
        \includegraphics[width=1.05in]{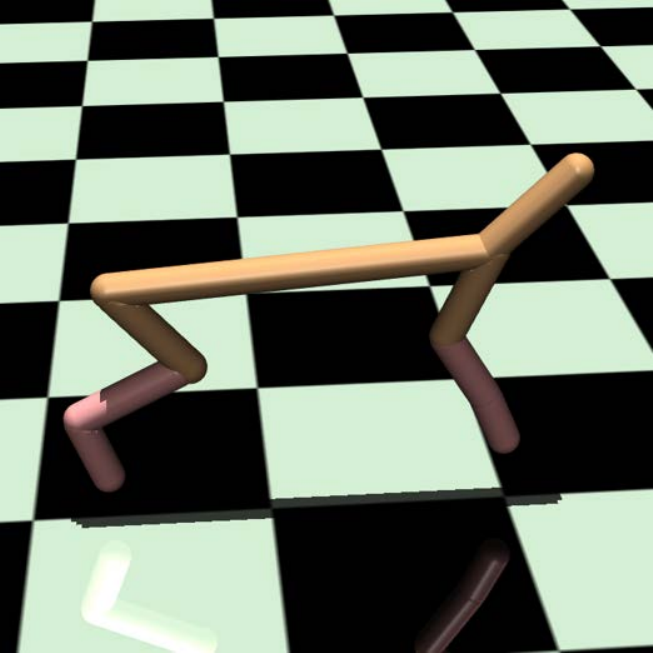}
    }
    \subfloat[Walker2D]{
        \label{Walker2D pic}
        \includegraphics[width=1.05in]{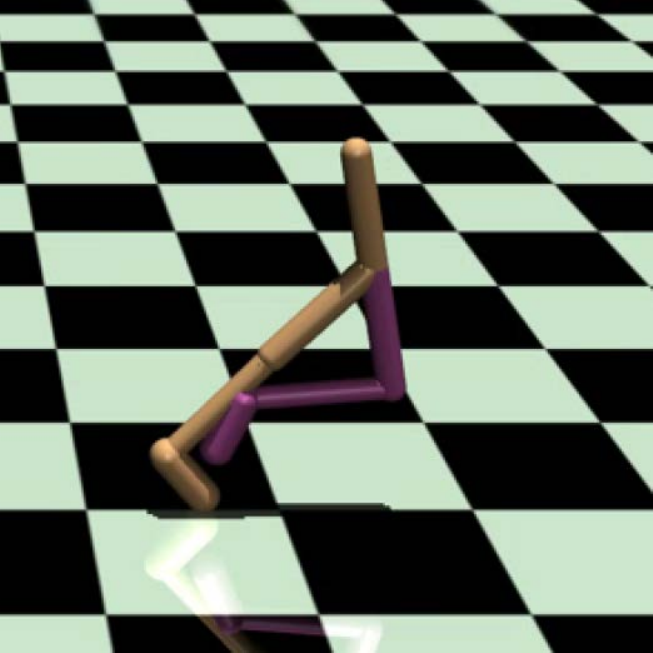}
    }
    \subfloat[Ant]{
        \label{Ant pic}
        \includegraphics[width=1.05in]{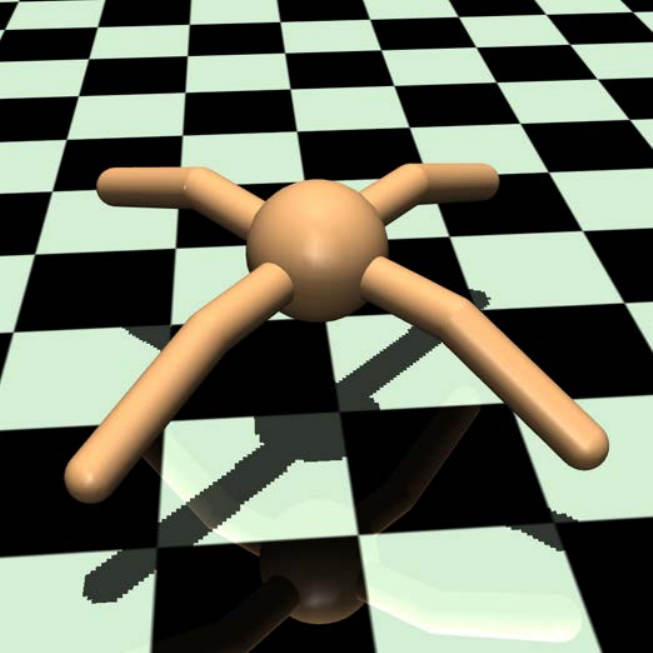}
    }
    \caption{Snapshots of three robots.}
\end{figure}

We compare MADAC with two standard MARL algorithms as well as two safe MARL algorithms.
\textbf{HAPPO} \cite{zhong2024heterogeneous} is a state-of-the-art on-policy MARL method that extends the standard PPO algorithm \cite{schulman2017proximal} to multi-agent settings with an agent-by-agent sequential update scheme, facilitating convergence to a Nash equilibrium. \textbf{HASAC} \cite{liu2024maximum} is a state-of-the-art off-policy MARL method with a similar structure to HAPPO but based on the maximum entropy framework of the SAC algorithm \cite{haarnoja2018soft}.
\textbf{MACPO} \cite{gu2023safe} is a state-of-the-art safe MARL method, which is the multi-agent extension of the standard CPO algorithm \cite{achiam2017constrained}, using a trust-region approach for constrained policy updates. \textbf{MAPPO-Lagrangian} \cite{gu2023safe} is a state-of-the-art safe MARL method that extends the PPO-Lagrangian algorithm \cite{ray2019benchmarking} to multi-agent scenarios, applying Lagrange multipliers for enforcing policy constraints.

The hyperparameters for HAPPO and HASAC are taken from the tuned configurations for standard (unconstrained) multi-agent MuJoCo environments in the open-source implementation provided by \cite{zhong2024heterogeneous, liu2024maximum}. The hyperparameters for MACPO and MAPPO-Lagrangian are taken from the tuned configurations in the open-source implementation provided by \cite{gu2023safe}. For each algorithm, we use the same set of hyperparameters across all environments, except for one adjustment with the Walker2D robot, where the learning rate of the Lagrange multipliers is increased for both MAPPO-Lagrangian and MADAC. Additionally, we do not terminate an episode when a constraint is violated, as this would blur the distinction between the algorithm's reward-seeking capability and its safety-preserving capability. We expect the algorithm to discover safe behaviors on its own, which aligns with the setting in \cite{ji2023safety, ray2019benchmarking}. An exception is made for MACPO in the Ant environments; without it, MACPO fails to learn, resulting in extremely negative episode returns.

\begin{figure*}[t]
    \centering
    \includegraphics[width=6in]{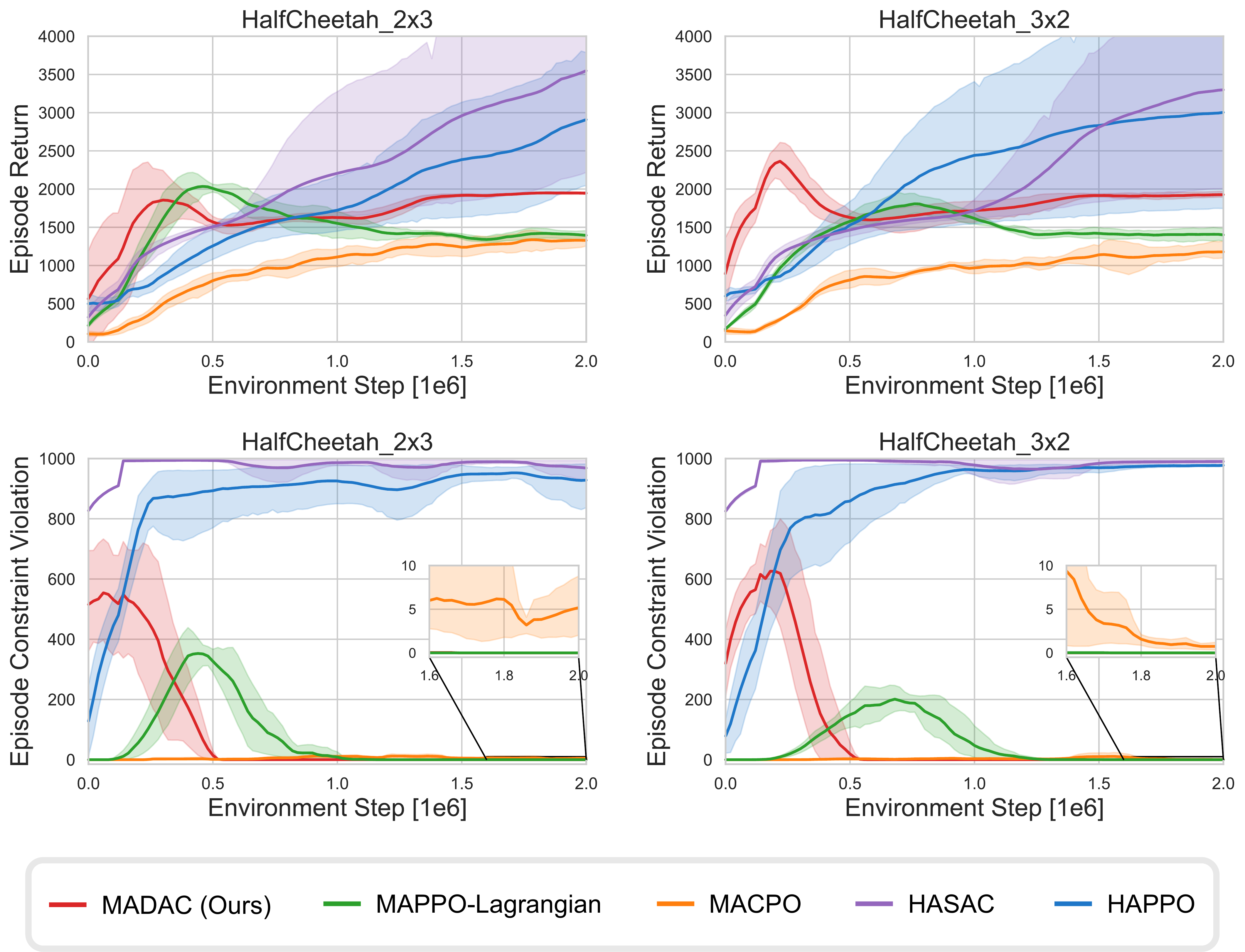}
    \caption{Training curves on HalfCheetah environments. While standard MARL baselines (HASAC and HAPPO) achieve higher rewards, they violate the constraints heavily. MADAC outperforms safe MARL baselines (MACPO and MAPPO-Lagrangian) by achieving significantly higher rewards, while maintaining equal or superior compliance with safety constraints.}
    \label{HalfCheetah_Results}
\end{figure*}

We adopt two evaluation metrics in the training process: episode return and episode constraint violation.
The training results of all algorithms for the six environments are shown in Figs. \ref{HalfCheetah_Results}, \ref{Walker2D_Results}, and \ref{Ant_Results}. The solid lines correspond to the mean and the shaded regions correspond to 95\% confidence interval over five seeds.
The standard MARL algorithms, HAPPO and HASAC, violate constraints heavily in all environments, highlighting the need for properly designed safe MARL algorithms for safety-critical tasks.
In general, among the safe MARL algorithms, our algorithm (MADAC) significantly outperforms MACPO and MAPPO-Lagrangian, achieving much higher rewards and fewer constraint violations.
In the HalfCheetah environments, MAPPO-Lagrangian achieves similar safety-preserving performance to MADAC. However, in the more complex Walker2D and Ant environments, MAPPO-Lagrangian violates constraints more frequently than MADAC. MACPO performs the poorest among the three algorithms, as it struggles to learn a zero-violation policy even in the simplest HalfCheetah environments.
A notable observation is that, in the HalfCheetah environments, MACPO shows fewer constraint violations during the early stages of training. This is because MADAC and MAPPO-Lagrangian, with their stronger reward-seeking capabilities, quickly learn to drive the robot faster (which violates the safety constraints heavily). As training progresses, both algorithms start to constrain the robot’s speed to meet the safety threshold.
Another important observation is MADAC's significant ability to achieve high rewards while maintaining safety. In the Ant environments, MADAC reaches returns comparable to HASAC but with significantly lower constraint violations. This demonstrates the effectiveness of MADAC's underlying mechanism, multi-agent dual policy iteration, which seeks generalized Nash equilibria to maximize rewards while ensuring joint constraint satisfaction.

\begin{figure*}[tbp]
    \centering
    \includegraphics[width=6in]{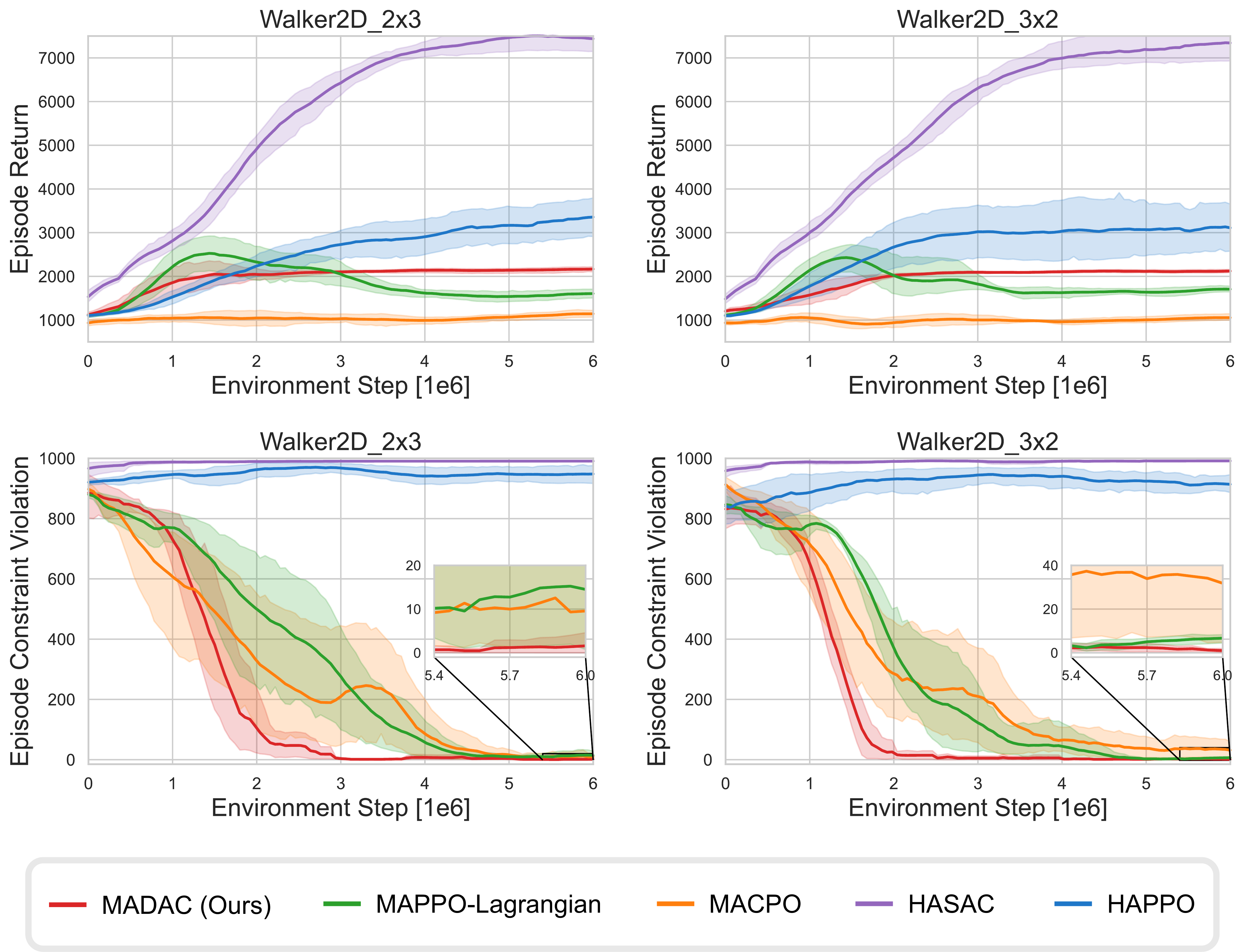}
    \caption{Training curves on Walker2D environments. While standard MARL baselines (HASAC and HAPPO) achieve higher rewards, they violate the constraints heavily. MADAC outperforms safe MARL baselines (MACPO and MAPPO-Lagrangian) by achieving significantly higher rewards, while maintaining equal or superior compliance with safety constraints.}
    \label{Walker2D_Results}
\end{figure*}

\begin{figure*}[tbp]
    \centering
    \includegraphics[width=6in]{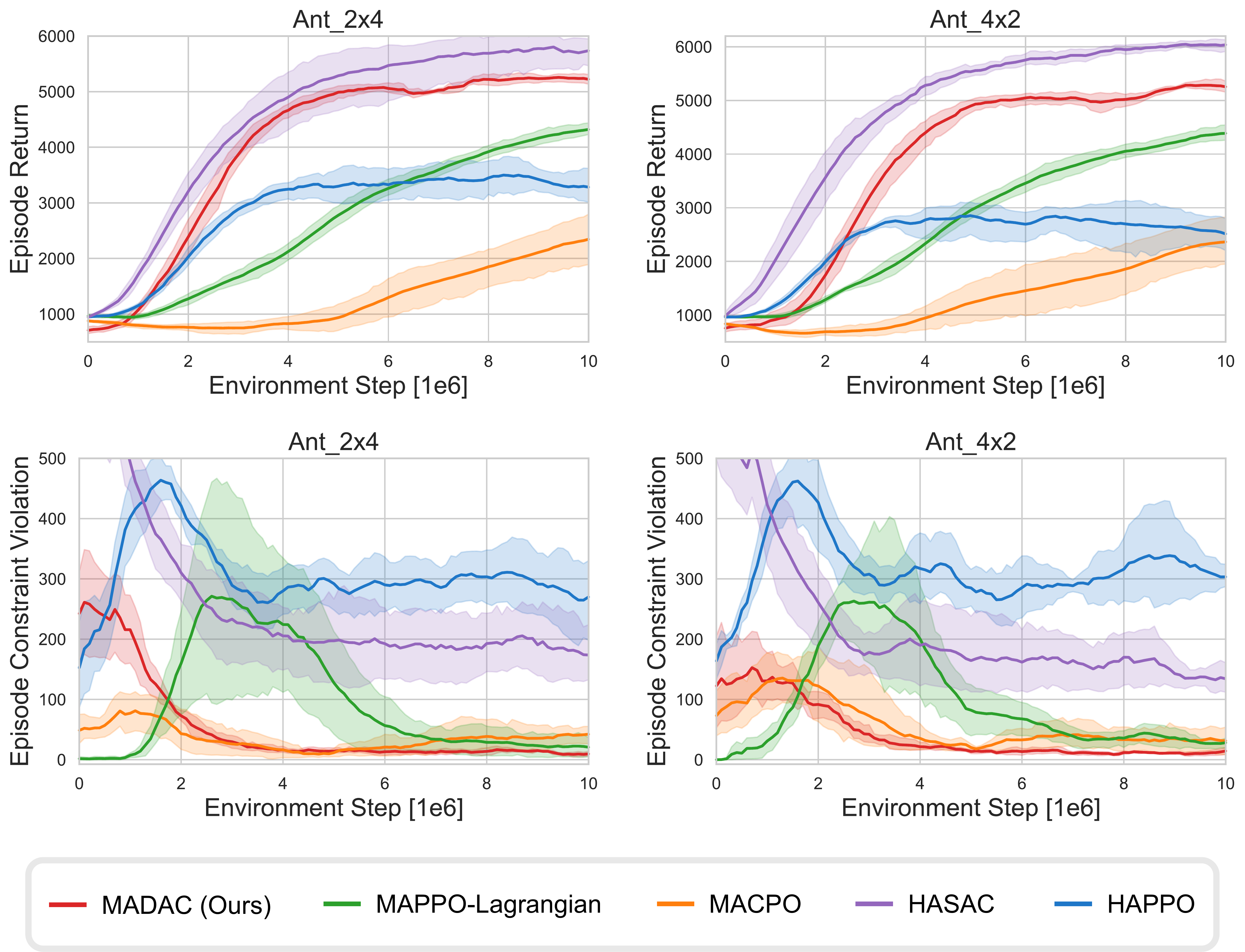}
    \caption{Training curves on Ant environments. Standard MARL baselines (HASAC and HAPPO) violate the constraints heavily. MADAC achieves rewards comparable to HASAC and substantially higher than HAPPO, while adhering to safety requirements. MADAC outperforms safe MARL baselines (MACPO and MAPPO-Lagrangian) by achieving significantly higher rewards, while maintaining equal or superior compliance with safety constraints.}
    \label{Ant_Results}
\end{figure*}

\section{Conclusion}

In this paper, we presented a comprehensive framework for solving state-wise safe MARL problems, covering both theoretical analysis and algorithm design. We introduced a multi-agent approach for identifying controlled invariant sets that monotonically converges to a Nash equilibrium on safety value functions. We then embedded this method within a multi-agent dual policy iteration scheme that guarantees convergence to a generalized Nash equilibrium for state-wise constrained cooperative Markov games. Furthermore, we proposed the multi-agent dual actor-critic algorithm for effective deployment in complex high-dimensional systems. Experimental results on safety-critical MARL benchmarks demonstrate the effectiveness of our method.

\section*{Acknowledgment}

We sincerely thank Stephanie Gil of Harvard University for insightful discussions during the early stages of this work, and Juan Cervino for his valuable feedback on the manuscript.
\end{sloppypar}

\FloatBarrier

\bibliographystyle{unsrt}
\bibliography{reference}

\begin{thebibliography}{10}

\bibitem{zhang2021multi}
Kaiqing Zhang, Zhuoran Yang, and Tamer Ba{\c{s}}ar.
\newblock Multi-agent reinforcement learning: A selective overview of theories and algorithms.
\newblock {\em Handbook of reinforcement learning and control}, pages 321--384, 2021.

\bibitem{shengbo2018reinforcement}
Shengbo~Eben Li.
\newblock {\em Reinforcement Learning for Sequential Decision and Optimal Control}.
\newblock Springer Verlag, Singapore, 2023.

\bibitem{lowe2017multi}
Ryan Lowe, Yi~I Wu, Aviv Tamar, Jean Harb, OpenAI Pieter~Abbeel, and Igor Mordatch.
\newblock Multi-agent actor-critic for mixed cooperative-competitive environments.
\newblock {\em Advances in neural information processing systems}, 30, 2017.

\bibitem{zhou2023malib}
Ming Zhou, Ziyu Wan, Hanjing Wang, Muning Wen, Runzhe Wu, Ying Wen, Yaodong Yang, Yong Yu, Jun Wang, and Weinan Zhang.
\newblock Malib: A parallel framework for population-based multi-agent reinforcement learning.
\newblock {\em Journal of Machine Learning Research}, 24(150):1--12, 2023.

\bibitem{rashid2020monotonic}
Tabish Rashid, Mikayel Samvelyan, Christian~Schroeder De~Witt, Gregory Farquhar, Jakob Foerster, and Shimon Whiteson.
\newblock Monotonic value function factorisation for deep multi-agent reinforcement learning.
\newblock {\em Journal of Machine Learning Research}, 21(178):1--51, 2020.

\bibitem{bertsekas2021multiagent}
Dimitri Bertsekas.
\newblock Multiagent reinforcement learning: Rollout and policy iteration.
\newblock {\em IEEE/CAA Journal of Automatica Sinica}, 8(2):249--272, 2021.

\bibitem{bhattacharya2023multiagent}
Sushmita Bhattacharya, Siva Kailas, Sahil Badyal, Stephanie Gil, and Dimitri Bertsekas.
\newblock Multiagent reinforcement learning: Rollout and policy iteration for pomdp with application to multi-robot problems.
\newblock {\em IEEE Transactions on Robotics}, 2023.

\bibitem{zhong2024heterogeneous}
Yifan Zhong, Jakub~Grudzien Kuba, Xidong Feng, Siyi Hu, Jiaming Ji, and Yaodong Yang.
\newblock Heterogeneous-agent reinforcement learning.
\newblock {\em Journal of Machine Learning Research}, 25(1-67):1, 2024.

\bibitem{liu2024maximum}
Jiarong Liu, Yifan Zhong, Siyi Hu, Haobo Fu, QIANG FU, Xiaojun Chang, and Yaodong Yang.
\newblock Maximum entropy heterogeneous-agent reinforcement learning.
\newblock In {\em The Twelfth International Conference on Learning Representations}, 2024.

\bibitem{vinyals2019grandmaster}
Oriol Vinyals, Igor Babuschkin, Wojciech~M Czarnecki, Micha{\"e}l Mathieu, Andrew Dudzik, Junyoung Chung, David~H Choi, Richard Powell, Timo Ewalds, Petko Georgiev, et~al.
\newblock Grandmaster level in starcraft ii using multi-agent reinforcement learning.
\newblock {\em nature}, 575(7782):350--354, 2019.

\bibitem{chu2019multi}
Tianshu Chu, Jie Wang, Lara Codec{\`a}, and Zhaojian Li.
\newblock Multi-agent deep reinforcement learning for large-scale traffic signal control.
\newblock {\em IEEE transactions on intelligent transportation systems}, 21(3):1086--1095, 2019.

\bibitem{yu2022surprising}
Chao Yu, Akash Velu, Eugene Vinitsky, Jiaxuan Gao, Yu~Wang, Alexandre Bayen, and Yi~Wu.
\newblock The surprising effectiveness of ppo in cooperative multi-agent games.
\newblock {\em Advances in Neural Information Processing Systems}, 35:24611--24624, 2022.

\bibitem{novati2021automating}
Guido Novati, Hugues~Lascombes de~Laroussilhe, and Petros Koumoutsakos.
\newblock Automating turbulence modelling by multi-agent reinforcement learning.
\newblock {\em Nature Machine Intelligence}, 3(1):87--96, 2021.

\bibitem{zhao2023state}
Weiye Zhao, Tairan He, Rui Chen, Tianhao Wei, and Changliu Liu.
\newblock State-wise safe reinforcement learning: A survey.
\newblock In {\em International Joint Conference on Artificial Intelligence}. IJCAI, 2023.

\bibitem{melcer2022shield}
Daniel Melcer, Christopher Amato, and Stavros Tripakis.
\newblock Shield decentralization for safe multi-agent reinforcement learning.
\newblock {\em Advances in Neural Information Processing Systems}, 35:13367--13379, 2022.

\bibitem{lu2021decentralized}
Songtao Lu, Kaiqing Zhang, Tianyi Chen, Tamer Ba{\c{s}}ar, and Lior Horesh.
\newblock Decentralized policy gradient descent ascent for safe multi-agent reinforcement learning.
\newblock In {\em Proceedings of the AAAI conference on Artificial Intelligence}, volume~35, pages 8767--8775, 2021.

\bibitem{gu2023safe}
Shangding Gu, Jakub~Grudzien Kuba, Yuanpei Chen, Yali Du, Long Yang, Alois Knoll, and Yaodong Yang.
\newblock Safe multi-agent reinforcement learning for multi-robot control.
\newblock {\em Artificial Intelligence}, 319:103905, 2023.

\bibitem{ding2023provably}
Dongsheng Ding, Xiaohan Wei, Zhuoran Yang, Zhaoran Wang, and Mihailo Jovanovic.
\newblock Provably efficient generalized lagrangian policy optimization for safe multi-agent reinforcement learning.
\newblock In {\em Learning for dynamics and control conference}, pages 315--332. PMLR, 2023.

\bibitem{ying2024scalable}
Donghao Ying, Yunkai Zhang, Yuhao Ding, Alec Koppel, and Javad Lavaei.
\newblock Scalable primal-dual actor-critic method for safe multi-agent rl with general utilities.
\newblock {\em Advances in Neural Information Processing Systems}, 36, 2024.

\bibitem{zhao2024multi}
Youpeng Zhao, Yaodong Yang, Zhenbo Lu, Wengang Zhou, and Houqiang Li.
\newblock Multi-agent first order constrained optimization in policy space.
\newblock {\em Advances in Neural Information Processing Systems}, 36, 2024.

\bibitem{altman1999constrained}
Eitan Altman.
\newblock {\em Constrained Markov Decision Processes: Stochastic Modeling}.
\newblock Routledge, 1999.

\bibitem{margellos2011hamilton}
Kostas Margellos and John Lygeros.
\newblock Hamilton--jacobi formulation for reach--avoid differential games.
\newblock {\em IEEE Transactions on Automatic Control}, 56(8):1849--1861, 2011.

\bibitem{yu2022reachability}
Dongjie Yu, Haitong Ma, Shengbo Li, and Jianyu Chen.
\newblock Reachability constrained reinforcement learning.
\newblock In {\em International Conference on Machine Learning}, pages 25636--25655. PMLR, 2022.

\bibitem{li2024safe}
Zeyang Li, Chuxiong Hu, Yunan Wang, Yujie Yang, and Shengbo~Eben Li.
\newblock Safe reinforcement learning with dual robustness.
\newblock {\em IEEE Transactions on Pattern Analysis and Machine Intelligence}, 2024.

\bibitem{bansal2017hamilton}
Somil Bansal, Mo~Chen, Sylvia Herbert, and Claire~J Tomlin.
\newblock {Hamilton-Jacobi} reachability: A brief overview and recent advances.
\newblock In {\em 2017 IEEE 56th Annual Conference on Decision and Control (CDC)}, pages 2242--2253. IEEE, 2017.

\bibitem{ha2020learning}
Sehoon Ha, Peng Xu, Zhenyu Tan, Sergey Levine, and Jie Tan.
\newblock Learning to walk in the real world with minimal human effort.
\newblock In {\em Conference on Robot Learning}, volume 155, pages 1110--1120. PMLR, 2021.

\bibitem{chow2017risk}
Yinlam Chow, Mohammad Ghavamzadeh, Lucas Janson, and Marco Pavone.
\newblock Risk-constrained reinforcement learning with percentile risk criteria.
\newblock {\em Journal of Machine Learning Research}, 18(1):6070--6120, 2017.

\bibitem{achiam2017constrained}
Joshua Achiam, David Held, Aviv Tamar, and Pieter Abbeel.
\newblock Constrained policy optimization.
\newblock In {\em International Conference on Machine Learning}, pages 22--31. PMLR, 2017.

\bibitem{zhang2020first}
Yiming Zhang, Quan Vuong, and Keith Ross.
\newblock First order constrained optimization in policy space.
\newblock {\em Advances in Neural Information Processing Systems}, 33:15338--15349, 2020.

\bibitem{hsu2023safety}
Kai-Chieh Hsu, Haimin Hu, and Jaime~F Fisac.
\newblock The safety filter: A unified view of safety-critical control in autonomous systems.
\newblock {\em Annual Review of Control, Robotics, and Autonomous Systems}, 7, 2023.

\bibitem{ames2019control}
Aaron~D Ames, Samuel Coogan, Magnus Egerstedt, Gennaro Notomista, Koushil Sreenath, and Paulo Tabuada.
\newblock Control barrier functions: Theory and applications.
\newblock In {\em 2019 18th European control conference (ECC)}, pages 3420--3431. IEEE, 2019.

\bibitem{liu2014control}
Changliu Liu and Masayoshi Tomizuka.
\newblock Control in a safe set: Addressing safety in human-robot interactions.
\newblock In {\em Dynamic Systems and Control Conference}, volume 46209, page V003T42A003. American Society of Mechanical Engineers, 2014.

\bibitem{ma2022joint}
Haitong Ma, Changliu Liu, Shengbo~Eben Li, Sifa Zheng, and Jianyu Chen.
\newblock Joint synthesis of safety certificate and safe control policy using constrained reinforcement learning.
\newblock In {\em Learning for Dynamics and Control Conference}, pages 97--109. PMLR, 2022.

\bibitem{wang2024magics}
Justin Wang, Haimin Hu, Duy~Phuong Nguyen, and Jaime~Fern{\'a}ndez Fisac.
\newblock Magics: Adversarial rl with minimax actors guided by implicit critic stackelberg for convergent neural synthesis of robot safety.
\newblock {\em arXiv preprint arXiv:2409.13867}, 2024.

\bibitem{zhao2024statewise}
Weiye Zhao, Rui Chen, Yifan Sun, Feihan Li, Tianhao Wei, and Changliu Liu.
\newblock State-wise constrained policy optimization.
\newblock {\em Transactions on Machine Learning Research}, 2024.

\bibitem{fisac2019bridging}
Jaime~F Fisac, Neil~F Lugovoy, Vicen{\c{c}} Rubies-Royo, Shromona Ghosh, and Claire~J Tomlin.
\newblock Bridging {Hamilton-Jacobi} safety analysis and reinforcement learning.
\newblock In {\em 2019 International Conference on Robotics and Automation (ICRA)}, pages 8550--8556. IEEE, 2019.

\bibitem{li2022infinite}
Jingqi Li, Donggun Lee, Somayeh Sojoudi, and Claire~J Tomlin.
\newblock Infinite-horizon reach-avoid zero-sum games via deep reinforcement learning.
\newblock {\em arXiv preprint arXiv:2203.10142}, 2022.

\bibitem{so2024train}
Oswin So, Zachary Serlin, Makai Mann, Jake Gonzales, Kwesi Rutledge, Nicholas Roy, and Chuchu Fan.
\newblock How to train your neural control barrier function: Learning safety filters for complex input-constrained systems.
\newblock In {\em 2024 IEEE International Conference on Robotics and Automation (ICRA)}, pages 11532--11539. IEEE, 2024.

\bibitem{haarnoja2018soft}
Tuomas Haarnoja, Aurick Zhou, Pieter Abbeel, and Sergey Levine.
\newblock Soft actor-critic: Off-policy maximum entropy deep reinforcement learning with a stochastic actor.
\newblock In {\em International conference on machine learning}, pages 1861--1870. PMLR, 2018.

\bibitem{todorov2012mujoco}
Emanuel Todorov, Tom Erez, and Yuval Tassa.
\newblock Mujoco: A physics engine for model-based control.
\newblock In {\em 2012 IEEE/RSJ international conference on intelligent robots and systems}, pages 5026--5033. IEEE, 2012.

\bibitem{ji2023safety}
Jiaming Ji, Borong Zhang, Jiayi Zhou, Xuehai Pan, Weidong Huang, Ruiyang Sun, Yiran Geng, Yifan Zhong, Josef Dai, and Yaodong Yang.
\newblock Safety gymnasium: A unified safe reinforcement learning benchmark.
\newblock {\em Advances in Neural Information Processing Systems}, 36, 2023.

\bibitem{schulman2017proximal}
John Schulman, Filip Wolski, Prafulla Dhariwal, Alec Radford, and Oleg Klimov.
\newblock Proximal policy optimization algorithms.
\newblock {\em arXiv preprint arXiv:1707.06347}, 2017.

\bibitem{ray2019benchmarking}
Alex Ray, Joshua Achiam, and Dario Amodei.
\newblock Benchmarking safe exploration in deep reinforcement learning.
\newblock {\em arXiv preprint arXiv:1910.01708}, 7(1):2, 2019.

\end{thebibliography}

\end{document}